\def\BibTeX{{\rm B\kern-.05em{\sc i\kern-.025em b}\kern-.08em
    T\kern-.1667em\lower.7ex\hbox{E}\kern-.125emX}}
\newtheorem{problem}{Problem}
\newtheorem{definition}{Definition}
\newtheorem{proposition}{Proposition}
\def\BibTeX{{\rm B\kern-.05em{\sc i\kern-.025em b}\kern-.08em
    T\kern-.1667em\lower.7ex\hbox{E}\kern-.125emX}}
\def\ps@IEEEtitlepagestyle{%
	\def\@oddfoot{\mycopyrightnotice}%
	\def\@evenfoot{}%
}
\def\mycopyrightnotice{%
	{\footnotesize 978-1-6654-8045-1/22/\$31.00~\copyright2022 IEEE \hfill} 
	\gdef\mycopyrightnotice{}
}
\newif\ifextendedreport
\newcommand{\showappendix}[1]{%
	\ifextendedreport
	\clearpage
	\appendices
	#1%
	\else
	\fi
}
\newcommand{\venueforappendix}[1]{%
	\ifextendedreport
	Appendix #1%
	\else
	our extended report \cite{zheng2021fl}%
	\fi
}
\begin{document}

\extendedreporttrue

\title{\huge{FL-Market}: Trading Private Models in Federated Learning}

\author{\IEEEauthorblockN{Shuyuan Zheng\IEEEauthorrefmark{1},
Yang Cao\IEEEauthorrefmark{2}\textsuperscript{\Letter}, Masatoshi Yoshikawa\IEEEauthorrefmark{1},
Huizhong Li\IEEEauthorrefmark{3}, Qiang Yan\IEEEauthorrefmark{4}}
\IEEEauthorblockA{\IEEEauthorrefmark{1}Kyoto University,
\IEEEauthorrefmark{2}Hokkaido University,
\IEEEauthorrefmark{3}WeBank Co., Ltd.,
\IEEEauthorrefmark{4}Singapore Management University\\
Email: \IEEEauthorrefmark{1}\{caryzheng@db.soc., yoshikawa@\}i.kyoto-u.ac.jp,
\IEEEauthorrefmark{2}yang@ist.hokudai.ac.jp,\\
\IEEEauthorrefmark{3}wheatli@webank.com,
\IEEEauthorrefmark{4}qiang.yan.2008@smu.edu.sg}}

\maketitle

\begin{abstract}
Acquiring a sufficient amount of training data is a significant bottleneck for machine learning (ML) based data analytics.
Recently, commoditizing ML models has been proposed as an economical and moderate solution to ML-oriented data acquisition.
However, existing model marketplaces assume that the broker can access data owners' private training data, which may not be realistic in practice.
In this paper, to promote trustworthy data acquisition for ML tasks, we propose FL-Market, a \textit{locally} private model marketplace that protects privacy against not only model buyers but also an untrusted broker.
FL-Market decouples ML from the need to centrally gather training data on the broker's side using \textit{federated learning}, a privacy-preserving ML paradigm in which data owners collaboratively train an ML model by uploading local gradients (to be aggregated into a global gradient for model updating).
Then, FL-Market enables data owners to locally perturb their gradients by \textit{local differential privacy} and thus further prevents privacy risks. 
To drive FL-Market, we propose a deep learning-empowered auction mechanism for intelligently deciding the local gradients' perturbation levels and an optimal aggregation mechanism for aggregating the perturbed gradients.  
Our auction and aggregation mechanisms can jointly maximize the global gradient's accuracy, which optimizes model buyers' utility.
Our experiments verify the effectiveness of the proposed mechanisms.
\end{abstract}

\begin{IEEEkeywords}
data trading, incentive mechanism, federated learning, local differential privacy
\end{IEEEkeywords}

\section{Introduction}
Machine learning (ML) based data analytics has demonstrated great success in many domains.
Acquiring a sufficient amount of private data to train ML models usually needs considerable expenses, especially as data owners are becoming increasingly aware of the value of their data and the severe risks from uncontrolled data usage after sharing the data.
Consequently, recent efforts have proposed \textit{model marketplaces} \cite{chen2019towards, jia2019efficient, agarwal2019marketplace, liu2021dealer, jiang2022pricing} where a data broker commercializes data owners' private data in the form of ML models to facilitate ML-oriented data acquisition.
Since model buyers do not contact training data directly, this category of business models can relieve data owners' concerns about losing control over their data and thus incentivize data sharing to some extent.

However, data owners still face notable privacy risks in the existing model marketplaces, which may make them hesitate to contribute data.
Although some works (e.g., \cite{chen2019towards, liu2021dealer, jiang2022pricing}) reduce privacy leakage to model buyers by injecting random noise into ML models using central differential privacy (CDP) \cite{dwork2006calibrating}, existing works assume that the broker is trusted and authorized to access and control the raw data.
This assumption is unrealistic, considering that many giant companies have been involved in user data breaches or privacy scandals.
Therefore, we demand a model marketplace that protects privacy against not only model buyers but also its broker.

\textit{Federated learning} (FL) \cite{mcmahan2016federated} has emerged as a promising paradigm for privacy-preserving ML.
Unlike traditional ML that requires training data to be stored on a centralized server (e.g., a broker in a model marketplace), FL enables the clients (i.e., data owners) to collaboratively train a model by uploading local updates (e.g., gradients) and, meanwhile, to keep their own training data on the local sides.
Since FL decouples ML from the need to centrally gather training data, it can largely restrict an untrusted server's ability to acquire private information.
Even though the local gradients trained on the raw data can be sensitive \cite{NEURIPS2019_60a6c400}, many works \cite{wang2020federated, sun2020ldp, liu2020fedsel, liu2020flame,wu2022fedctr} suggest that \textit{local differential privacy} (LDP) \cite{evfimievski2003limiting} can be combined with FL to perturb the gradients on the local sides and thus protect privacy.

\begin{figure}[ht]
    \centering
    \includegraphics[scale=0.3]{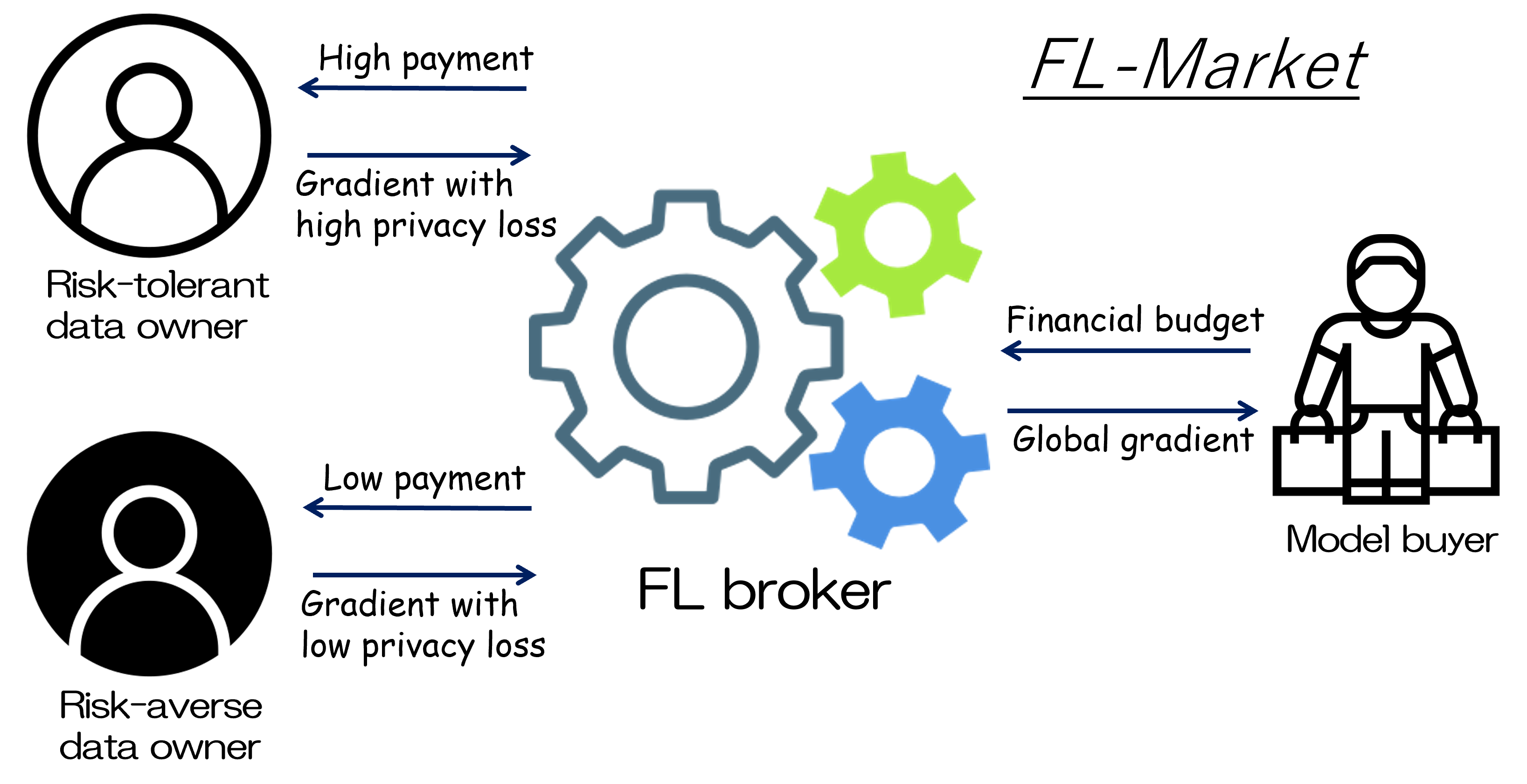}
    \caption{FL-Market allows data owners to control the perturbation level of their gradients in each round of FL training. Those data owners who contribute more accurate gradients (i.e., with less noise) will receive higher payments.}
    \label{fig:fl_market}
\end{figure}

In this paper, for the first time, we propose a locally private model marketplace empowered by FL and LDP, called \textit{FL-Market} (Federated Learning Based Locally Private Model Market), to promote trustworthy data acquisition for ML-based data analytics.
Figure \ref{fig:fl_market} depicts the three parties in FL-Market: data owners, model buyers, and an FL broker.
The FL broker coordinates FL-based model training and trading between data owners and model buyers.
A model buyer attempts to purchase ML models with a financial budget.
Data owners do not sell their raw data; instead, they sell locally private gradients perturbed by LDP in the training process coordinated by the FL broker.
The perturbation level is controlled by a privacy parameter $\epsilon$, which LDP formally defines as a metric of privacy loss.
To incentivize contribution, we follow seminal differentially private data marketplaces \cite{ghosh2011selling, ghosh2015selling, roth2012buying, nissim2012privacy, fleischer2012approximately, nissim2014redrawing} to employ an auction-based method for pricing gradients.
Concretely, we allow each owner to report (bid) her valuation of privacy loss, named \textit{privacy valuation}, and report the maximum tolerable privacy loss, called \textit{privacy budget}.
Then, the broker uses an \textit{auction mechanism} to decide each owner's privacy parameter and compensate for the corresponding privacy loss according to her privacy valuation.
The auction should guarantee \textit{truthfulness}, which means each data owner (i.e., a bidder) will never obtain a higher utility by reporting an untruthful privacy valuation and budget.
Finally, the perturbed local gradients are aggregated into a global gradient by an \textit{aggregation mechanism} to update the buyer's model.

Building this model marketplace calls for an elaborate mechanism design that enables the auction and aggregation mechanisms to jointly optimize the global gradient's utility.
First, in FL-Market, the broker has to aggregate the locally private gradients considering their various accuracy levels.
Consequently, the aggregation mechanism should factor in the privacy losses decided by the auction mechanism when making a decision.
Second, the auction mechanism should properly purchase local gradients to maximize the aggregated gradient's utility, which implies that the aggregation decision feeds back into the auction decision. 
However, the aggregation mechanism may fail to provide an analytical solution.
In this case, the utility-maximizing objective of our auction problem also cannot be expressed in an analytic form, which makes it extremely challenging to characterize and design an optimal truthful mechanism. 
In a nutshell, the need for joint optimization dramatically increases the complexity of optimal mechanism design.

Our main contributions are threefold.
\begin{itemize}[leftmargin=*, topsep=0pt]
    \item We design a novel privacy-preserving model trading framework, \textit{FL-Market}, for acquiring locally private ML models via FL (Section \ref{sec:framework}). 
    In FL-Market, data owners maintain control of their raw data by FL and enjoy the desired level of privacy against both the broker and model buyers using LDP.
    To the best of our knowledge, FL-Market is the first \textit{locally} private model marketplace.
    On the other end, we formulate optimization problems for designing the auction and aggregation mechanisms with the objective of maximizing the global gradient's accuracy, which optimizes model buyers' utility.
    \item We propose an optimal aggregation mechanism \textit{OptAggr} for FL with personalized LDP parameters (Section \ref{sec:ag}).
    The conventional practice of FL aggregates gradients with weights proportional to clients' data sizes (i.e., all samples are uniformly weighted), which may not be optimal when the gradients are perturbed to different extents.
    We transform the problem of designing an optimal aggregation mechanism under personalized privacy losses into an equivalent \textit{quadratic programming problem}.
    We prove that the equivalent problem is convex and thus can be solved by off-the-shelf optimizers.
    Supported by the optimizers, OptAggr decides the optimal way to aggregate the gradients.
    \item We propose a novel auction mechanism, \textit{DM-RegretNet}, to incentivize data owners to contribute accurate gradients (Section \ref{sec:ac}).
    Concretely, to design an optimal mechanism that jointly optimizes the gradient's utility with the aggregation mechanism, we seek support from RegretNet, the state-of-the-art deep learning-empowered automated mechanism design technique \cite{dutting2019optimal}.
    However, RegretNet always generates \textit{randomized} allocation results for auction items (i.e., the privacy losses in our case), which makes it tough to maximize the global gradient's accuracy. 
    On the contrary, DM-RegretNet (Deterministic Multi-Unit RegretNet) yields \textit{deterministic} auction decisions jointly with OptAggr and thus can significantly improve the global gradient's utility. 
    Our extensive experiments demonstrate that DM-RegretNet can achieve better model accuracy and approximate the truthfulness constraint more closely than RegretNet.

\end{itemize}

\section{Preliminary}
\label{sec:prelim}

\subsubsection*{Federated learning}
\label{sec:fl}
FL is a privacy-preserving framework for collaborative ML. 
In a typical FL architecture, $n$ data owners $\{1,...,n\}$ collaboratively train an ML model $h_{w}(\cdot)$ using their datasets $\{D_1,..., D_n\}$ under the coordination of an FL server (e.g., the FL broker in FL-Market), where $w$ is a set of model parameters.
The training process consists of multiple training rounds $1,..., R$.
We show a training round $r\in [R]$ of the widely-used FedSGD algorithm \cite{mcmahan2016federated} as follows.

\begin{enumerate}[leftmargin=*]
    \item \textbf{Model broadcasting}: The server broadcasts model parameters $w^r$ with a loss function $l(\cdot)$.
    \item \textbf{Local training}: Each data owner $i$ computes a local gradient $g_i$ using her local dataset $D_i=[r_{i,j}]_{j\in[d_i]}$ consisting of $d_i$ records.
    The gradient $g_i$ is the mean gradient of the records, i.e., $g_i=\mathbb{E}_{r \in D_i}[\nabla l(w^r; r)]$.
    \item \textbf{Gradients aggregation}: The server collects all the local gradients and aggregates them into a global gradient $g^*$ by averaging, i.e., $g^* = \sum_{i=1}^{n} \frac{d_i}{d_1+...+d_n} g_i$ where $d_i$ denotes the size of $D_i$.
    \item \textbf{Model updating}: The server updates the model parameters $w^r$ by the global gradient, i.e., $w^{r+1}=w^r-\eta \cdot g^*$ where $\eta\in R^+$ is a learning rate.
\end{enumerate}
In addition, gradient clipping is a widely used method for avoiding the exploding gradient problem \cite{bengio1994learning} where unacceptably large gradients make the training process unstable. 
In this paper,  we adopt the gradient clipping method $clip$ \cite{pascanu2013difficulty} that rescales a gradient $g_i$ if its norm cannot be covered by a threshold $L$, i.e., $clip(g_i, L)=g_i\cdot\min(1, \frac{L}{||g_i||_1})$.
To reduce notational overload, we let each $g_i$ denote the clipped version in the rest of this paper, i.e.,
\begin{equation}
\label{eq:gradient}
    g_i=\mathbb{E}_{r \in D_i}[\nabla l(w^r; r)]\cdot \min(1, \frac{L}{||\mathbb{E}_{r \in D_i}[\nabla l(w^r; r)]||_1})
\end{equation}

\subsubsection*{Local differential privacy}
\label{sec:ldp}
LDP \cite{evfimievski2003limiting} is a de facto data privacy definition.
In FL, even if data owners maintain their datasets on the local sides, their private information still can be inferred from the uploaded gradients by the server \cite{NEURIPS2019_60a6c400}. 
To prevent privacy leakage, data owners can use an LDP perturbation mechanism $\mathcal{M}$, such as the Laplace mechanism \cite{dwork2006calibrating}, to perturb the gradients before uploading them, which ensures that any change to the mechanism's input does not significantly affect the output.
The protection level of LDP for owner $i$ is parameterized by $\epsilon_i$, which also quantifies her privacy loss.
A smaller $\epsilon_i$ corresponds to a higher protection level and a more randomized perturbation.
We let $\mathcal{M}_{\epsilon_i}$ denote a perturbation mechanism that satisfies $\epsilon_i$-LDP.
Note that if we perturb a gradient $g_i$ by $\mathcal{M}_{\epsilon_i}$, releasing the perturbed gradient also satisfies $\epsilon_i$-LDP for each record $r\in D_i$.

\begin{definition}[$\epsilon_i$-Local Differential Privacy \cite{evfimievski2003limiting}]
\label{def:ldp}
Given a privacy loss $\epsilon_i \geq 0$, a randomized mechanism $\mathcal{M}$ satisfies $\epsilon_i$-LDP if for any two inputs $x, x'\in Domain(\mathcal{M})$ and any output $o\in Range(\mathcal{M})$, we have:
\begin{equation*}
\setlength\abovedisplayskip{3pt}
\setlength\belowdisplayskip{3pt}
    Pr[\mathcal{M}(x)=o] \leq \exp{(\epsilon_i)} \cdot Pr[\mathcal{M}(x')=o]
\end{equation*}
\end{definition}


\section{FL-Market Framework}
\label{sec:framework}


\subsection{Market Setup}
\label{subsec:overview}
\subsubsection*{Participants}
As shown in Figure \ref{fig:market_overview}, there are three parties in FL-Market: data owners, model buyers, and an FL broker. 
A \textit{model buyer} enters FL-Market to purchase a global gradient with a financial budget $B$ at each FL training round $r$ to train her target model $h_{w^r}$. 
We assume that the buyer already knows that data owners' data attributes meet her needs.
\textit{Data owners} $\mathcal{N}=\{1,...,n\}$ possess local datasets $D=\{D_1,...,D_n\}$ that can be used to compute local gradients $g_1,...,g_n$ for training $h_{w^r}$. 
To prevent privacy leakage against the FL broker and model buyers, each owner $i$ perturbs her local gradient $g_i$ using a perturbation mechanism $\mathcal{M}_{\epsilon_i}$ that satisfies $\epsilon_i$-LDP.
The \textit{broker} mediates between the model buyer and data owners in the FL process: it arranges the training tasks among data owners, collects their perturbed local gradients, and aggregates them into a perturbed global gradient for the buyer. 
In addition, the broker sets the payments $p_1,...,p_n$ to data owners within the buyer's budget $B$. 

\subsubsection*{Privacy valuation}
Inspired by \cite{ghosh2011selling, ghosh2015selling}, FL-Market requires data owners to report their privacy valuations to price perturbed gradients. 
Concretely, each owner $i$ has a \textit{valuation function} $v_i(\epsilon_i, d_i)$ that reflects her valuation of her privacy loss $\epsilon_i$ for her $d_i$-sized dataset: she will accept a privacy loss $\epsilon_i$ for $d_i$ records if she obtains a payment $p_i \geq v_i(\epsilon_i, d_i)$. 
However, in \cite{ghosh2011selling, ghosh2015selling}, data owners cannot set the upper bounds of their privacy losses.
To provide better privacy protection as an incentive, we follow Zheng et al. \cite{zheng2020money} to allow each owner $i$ to set a \textit{privacy budget} $\bar{\epsilon}_i$ that denotes the maximum tolerable privacy loss. 
In practice, the broker can provide some instructions to help data owners decide privacy valuations and budgets, e.g., questionnaires for figuring out privacy preferences, typical choices for different preferences, and some analysis of historical transaction data.

\subsubsection*{Threat model}
We assume that all the participants are honest-but-curious, which means they will not deviate from the protocol but will attempt to learn information from received messages.
Note that in an auction, reporting a \textit{fake bid} that does not represent the bidder's real preference is not a malicious behavior that violates the protocol since the auction allows bidders to submit arbitrary bids.

\begin{figure*}[ht]
    \centering
    \includegraphics[scale=0.4]{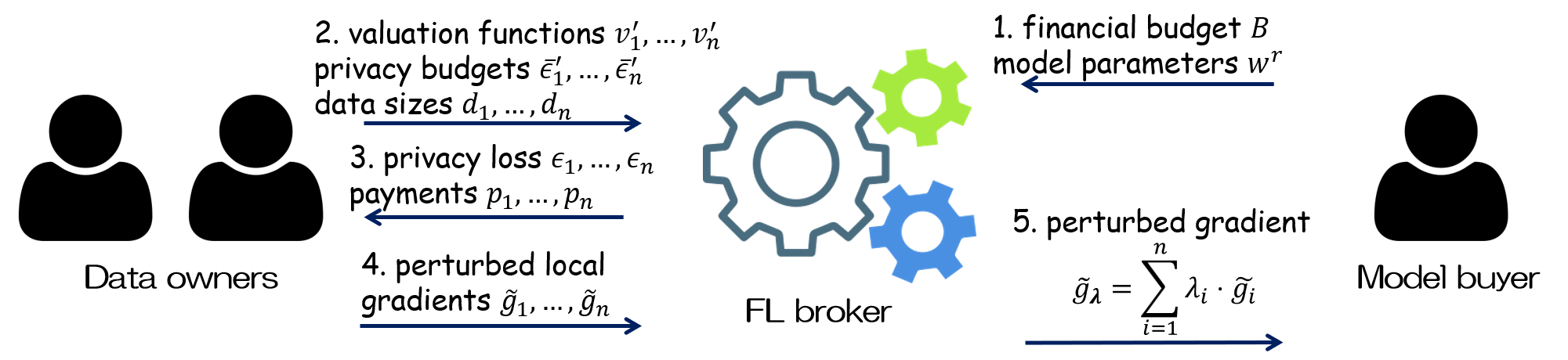}
    \caption{FL-Market Trading Framework.}
    \label{fig:market_overview}
\end{figure*}

\subsection{Trading Framework}
\label{subsec:auc}

We depict the trading framework in Fig. \ref{fig:market_overview} and Alg. \ref{alg:framework}. 
Initially, a model buyer enters FL-Market and specifies a target model $h_w(\cdot)$ with a loss function $l(\cdot)$ for FL.
Then, in each FL training round $r$, the buyer purchases a global gradient for model updating by the following steps:

\begin{algorithm}[h]
\small
\caption{Trading Framework of FL-Market}
\begin{algorithmic}[1]
\STATE A buyer specifies a model $h_{w}(\cdot)$ with a loss function $l(\cdot)$.
\FOR{each FL training round $r$}
\STATE The buyer announces an auction with a financial budget $B$ and model parameters $w^r$.
\STATE Data owners report their bids $\boldsymbol{b}'=(b_1',...,b_n')$.
\STATE The broker runs $\textsf{Auc}(\boldsymbol{b}', B) \to \boldsymbol{\epsilon}, \boldsymbol{p}$.
\STATE The broker broadcasts $w^{r}$ and data owners compute perturbed local gradients $\tilde{g}_1,...,\tilde{g}_n$.
\STATE  The broker runs $\textsf{Aggr}(\boldsymbol{\epsilon}, \boldsymbol{d}) \to  \boldsymbol{\lambda}$.
\STATE The broker delivers a global gradient $\tilde{g}_{\boldsymbol{\lambda}}=\sum_{i=1}^{n} {\lambda}_i \cdot \tilde{g}_i$ to the buyer for model updating.
\ENDFOR
\end{algorithmic}
\label{alg:framework}
\end{algorithm}

\begin{enumerate}[leftmargin=*]
    \item \textbf{Auction announcement}: The buyer asks the FL broker to announce a procurement auction (where bidders are sellers) for purchasing gradients, specifying a financial budget $B$ and model parameters $w^r$.
    \item \textbf{Bidding}: Data owners report their bids $b_1',...,b_n'$ in the auction.
    We assume that each owner $i$ has a \textit{real bid} $b_i=(v_i, \bar{\epsilon}_i, \bar{d}_i)$ in mind consisting of her valuation function $v_i$, the maximum privacy budget $\bar{\epsilon}_i$, and the maximum size of her dataset $\bar{d}_i$.
    Then, each $i$ reports to the broker a valuation function $v_i'$, a privacy budget $\bar{\epsilon}_i'$ and a data size $d_i$ as a \textit{reported bid} $b_i'=(v_i', \bar{\epsilon}_i', d_i)$.
    If the reported bid $b_i'$ is \textit{truthful}, then $b_i' = b_i$; otherwise, it is a \textit{fake bid}, i.e., $b_i' \neq b_i$.
    We simplify "reported bid" as "bid" and denote the collection of all the bids as a \text{bid profile} $\boldsymbol{b}'=[b_1',...,b_n']$.
    \item \textbf{Auction decision}: 
    The broker runs an \textit{auction mechanism} $\textsf{Auc}$ to decide data owners' privacy losses and payments. 
    Formally, an {auction mechanism} given a bid profile $\boldsymbol{b}'$ and a financial budget $B$ yields an allocation of privacy losses $\boldsymbol{\epsilon}=[\epsilon_1,...,\epsilon_n]$ and payments $\boldsymbol{p}=[p_1,...,p_n]$.
    \item \textbf{Local gradient computing}: 
    Given model parameters $w^r$, each data owner $i$ computes and submits a noisy gradient $\tilde{g}_i=\mathcal{M}_{\epsilon_i}(g_i)$ to the broker. 
    \item \textbf{Gradients aggregation and model delivery}: 
    The FL broker runs an \textit{aggregation mechanism} $\textsf{Aggr}$ to aggregate those noisy gradients into a perturbed global gradient $\tilde{g}_{{\boldsymbol{\lambda}}}$. Finally, the broker returns $\tilde{g}_{{\boldsymbol{\lambda}}}$ to the model buyer.
\end{enumerate}


\subsubsection*{Gradients aggregation}
In step (5), the broker needs a ``good'' strategy to aggregate the collected noisy gradients.
To study the optimality of the aggregation mechanism in our setting, we generalize the problem as follows.
Formally, given data owners' perturbed gradients $\tilde{g}_1,...,\tilde{g}_n$, the broker sets the \textit{aggregation weights} ${\boldsymbol{\lambda}} = [{\lambda}_1,...,{\lambda}_n]$ with $\sum_{i=1}^n {\lambda}_i = 1, {\lambda}_i \geq 0, \forall i$ and then computes the perturbed global gradient as:
\begin{equation}
\label{eq:aggr}
    \tilde{g}_{\boldsymbol{\lambda}}=\sum_{i=1}^{n} {\lambda}_i \cdot \tilde{g}_i.
\end{equation}
We note that Equation \eqref{eq:aggr} is a generalization of the \textit{weighted aggregation} \cite{mohri2019agnostic, mcmahan2016federated} in the literature.
Then, we attempt to design an optimal aggregation mechanism under personalized privacy losses.
Specifically, we define the aggregation mechanism as a function $\textsf{Aggr}: R^{2n} \!\rightarrow\! R^n$ that given privacy losses $\boldsymbol{\epsilon}\!=\![\epsilon_1,...,\epsilon_n]$ and data sizes $\boldsymbol{d}\!=\![d_1,...,d_n]$ outputs aggregation weights ${\boldsymbol{\lambda}}\!=\![{\lambda}_1,...,{\lambda}_n]$ for weighted aggregation.


\subsection{Mechanism Design} 
\label{sec:problem_formulation}
In this section, we formulate the problems of designing the auction mechanism $\textsf{{Auc}}$ and aggregation mechanism $\textsf{Aggr}$ (Lines $5$ and $7$ in Alg. \ref{alg:framework}, respectively) to instantiate the trading protocol of FL-Market. 
The mechanism design should achieve the following two goals: 
(1) to provide \textit{utility-optimal} global gradients and (2) to prevent \textit{untruthful} privacy valuations.

\subsubsection*{\textbf{Aggregation mechanism}}
The aggregation mechanism should optimally aggregate perturbed local gradients to provide highly usable global gradients for model buyers.
Concretely, given local gradients $\tilde{g}_1,...,\tilde{g}_n$ with privacy losses $\boldsymbol{\epsilon}$ and data sizes $\boldsymbol{d}$, $\textsf{Aggr}$ should yield optimal aggregation weights that minimize the error of the global gradient: 
\begin{equation*}
    \min_{{\boldsymbol{\lambda}}} err(\tilde{g}_{{\boldsymbol{\lambda}}}; \boldsymbol{\epsilon}, \boldsymbol{d})=||\tilde{g}_{{\boldsymbol{\lambda}}}-g^*||_2=||\sum_{i=1}^{n}{\lambda}_i\cdot\mathcal{M}_{\epsilon_i}(g_i)-g^*||_2
\end{equation*}
where $g^*=\sum_{i=1}^{n}\frac{d_i}{\sum_{j=1}^{n}d_j}g_i$ is the raw global gradient without any perturbation. The lower the error $err(\tilde{g}_{{\boldsymbol{\lambda}}})$ is, the smaller the difference between $\tilde{g}_{{\boldsymbol{\lambda}}}$ and $g^*$, which also implies that the buyer will obtain a more accurate global model. 

However, the broker cannot calculate the ground-truth error $err(\tilde{g}_{{\boldsymbol{\lambda}}})$ under LDP without the access to $g_1,..,g_n$. Hence, we turn to the error bound $ERR(\tilde{g}_{{\boldsymbol{\lambda}}})$ and design the aggregation mechanism by solving the following problem:

\begin{problem}[Error Bound-Minimizing Aggregation]
\label{problem:error_mini}
\begin{align*}
         \min_{{\boldsymbol{\lambda}}=\textsf{Aggr}(\boldsymbol{\epsilon},\boldsymbol{d})} & ERR(\tilde{g}_{{\boldsymbol{\lambda}}}; \boldsymbol{\epsilon}, \boldsymbol{d}) =\sup_{g_1,...,g_n}err(\tilde{g}_{{\boldsymbol{\lambda}}}; \boldsymbol{\epsilon}, \boldsymbol{d})\\
        \text{S.t.: }  & \forall i, {\lambda}_i \in [0,1], \text{ and }  \sum_{i=1}^{n} {\lambda}_i=1
\end{align*}
\end{problem}


\subsubsection*{\textbf{Auction mechanism}}
Solving Problem \ref{problem:error_mini} alone is still insufficient to determine a utility-optimal global gradient since the utility is also affected by the privacy losses purchased for perturbing the local gradients.
That is, the auction mechanism $\textsf{Auc}$ should take the aggregation mechanism into account to \textit{jointly} optimize the (expected) error bound of the global gradient over all possible bid profiles and financial budgets:
\begin{align*}
    &\min_{\boldsymbol{\epsilon}, \boldsymbol{p} 
    =\textsf{Auc}(\boldsymbol{b}', B)} \mathbb{E}_{(\boldsymbol{b}', B)} [ERR(\tilde{g}_{{\boldsymbol{\lambda}}}; {\boldsymbol{\lambda}}=\textsf{Aggr}(\boldsymbol{\epsilon}, \boldsymbol{d}))] \\
    =&\mathbb{E}_{(\boldsymbol{b}', B)} [\sup_{g_1,...,g_n} ||\sum_{i=1}^{n}{\lambda}_i\cdot\mathcal{M}_{\epsilon_i}(g_i)-g^*||_2]\\
    =& \mathbb{E}_{(\boldsymbol{b}', B)} [\sup_{g_1,...,g_n} ||\textsf{Aggr}(\boldsymbol{\epsilon}, \boldsymbol{d})\cdot[\mathcal{M}_{\epsilon_1}(g_1),...,\mathcal{M}_{\epsilon_n}(g_n)]-g^*||_2]
\end{align*}


Then, $\textsf{Auc}$ needs to determine appropriate auction results that prevent untruthful privacy valuations.
Concretely, by trading a global gradient, each data owner $i$ obtains a utility 
\begin{equation}
    u_i(b_i'; \boldsymbol{b}_{-i}', B)=
    \begin{cases}
    p_i - v_i(\epsilon_i, d_i), & \epsilon_i \leq \bar{\epsilon}_i, d_i\leq \bar{d}_i\\
    -\infty, & \text{otherwise}
    \end{cases}
    \nonumber
\end{equation} 
where $\boldsymbol{b}_{-i}'=(b_1',...,b_{i-1}',b_{i+1}',...,b_n')$ denotes the other bidders' bids. Then, $\textsf{Auc}$ should ensure the following incentives:
\begin{itemize}[leftmargin=*]
    \item Truthfulness: With the other bidders' bids $\boldsymbol{b}_{-i}'$ fixed, each bidder $i$ never obtains a higher utility by reporting a fake bid $b_i'\neq b_i$, i.e., $\forall i, \forall b_i', \forall B, u_i(b_i';\boldsymbol{b}_{-i}', B) \leq u_i(b_i; \boldsymbol{b}_{-i}', B)$.
    \item Individual rationality (IR): Each bidder $i$ never obtains a negative utility, i.e., $u_i(b_i')\geq 0, \forall b_i', \forall i$.
    \item Budget feasibility (BF): The payments should be within the financial budget, i.e., $\sum_i p_i \leq B$.
\end{itemize}
Therefore, we can design the auction mechanism by solving the following problem.

\begin{problem}[Budget-Limited Multi-Unit Multi-Item Procurement Auction]
\label{problem:auction}
\begin{align*}
         &\min_{\boldsymbol{\epsilon}, \boldsymbol{p}=\textsf{Auc}(\boldsymbol{b}', B)}  \mathbb{E}_{(\boldsymbol{b}', B)} [ERR(\tilde{g}_{{\boldsymbol{\lambda}}}; {\boldsymbol{\lambda}}=\textsf{Aggr}(\boldsymbol{\epsilon}, \boldsymbol{d}))]\\
        \text{S.t.: }  & \forall i, \epsilon_i \in [0, \bar{\epsilon}'_i],  \text{ truthfulness, IR, and BF.}
\end{align*}
\end{problem}
Problem \ref{problem:auction} is a budget-limited multi-unit multi-item procurement auction problem \cite{chan2014truthful} because (1) each data owner's privacy loss $\epsilon_i$ can be seen as a divisible item for procurement with $\bar{\epsilon}'_i$ units available, and (2) the buyer purchases privacy losses under her financial budget $B$.
To the best of our knowledge, such a problem has yet to be generally solved in the literature.
Moreover, we have to involve the aggregation mechanism in minimizing the global gradient's error bound, which increases the complexity of optimal mechanism design.
Concretely, the privacy losses affect the aggregation weights in Problem \ref{problem:error_mini}, but the latter also feeds back into the former in Problem \ref{problem:auction}, which calls for joint optimization.
By solving this problem, we can obtain an auction mechanism that maximizes the global gradient's utility jointly with \textsf{Aggr}.

\subsubsection*{\textbf{Computational efficiency}}
We additionally require that the auction and aggregation mechanisms (designed by solving Problems \ref{problem:error_mini} and \ref{problem:auction}) should finish in polynomial time, which ensures the efficiency of FL-Market.
Note that we design the mechanisms offline before executing Algorithm \ref{alg:framework} rather than during each FL training round therein.




\section{Aggregation Mechanism: OptAggr}
\label{sec:ag}
In this section, we propose an error-optimal aggregation mechanism \textit{OptAggr} by solving a \textit{convex quadratic programming problem} that we prove is equivalent to Problem \ref{problem:error_mini}.

\subsubsection*{Error bound decomposition}
It is well known that the MSE error of a random variable consists of its variance and squared bias.
Let $\sigma_i$ denote the variance of the local gradient $\tilde{g}_i$, and let $W_i = \frac{d_i}{\sum_{j\in [n]} d_j}, \forall i$.
We can decompose the error $err(\tilde{g}_{{\boldsymbol{\lambda}}}; \boldsymbol{\epsilon},\boldsymbol{d})$ as $err(\tilde{g}_{{\boldsymbol{\lambda}}}; \boldsymbol{\epsilon},\boldsymbol{d})=var(\tilde{g}_{{\boldsymbol{\lambda}}}; \boldsymbol{\epsilon}) + {bias}^2(\tilde{g}_{{\boldsymbol{\lambda}}}; \boldsymbol{\epsilon},\boldsymbol{d})$ where

\begin{align*}
    & var(\tilde{g}_{{\boldsymbol{\lambda}}}; \boldsymbol{\epsilon})=var(\sum_{i=1}^{n}{\lambda}_i \tilde{g}_i;\boldsymbol{\epsilon} ) = \sum_{i=1}^{n}({\lambda}_i)^2 \sigma_i,\\
    & {bias}(\tilde{g}_{{\boldsymbol{\lambda}}}; \boldsymbol{\epsilon},\boldsymbol{d})=||E[\tilde{g}_{\boldsymbol{\lambda}}] - E[g^*] ||_2 = \medmath{||\sum_{i=1}^{n} {\lambda}_i g_i -\sum_{i=1}^{n} W_i g_i ||_2}\\
    = & || \sum_{i=1}^{n} ({\lambda}_i - W_i) g_i ||_2 \leq \sum_{i=1}^{n} |{\lambda}_i - W_i| \cdot ||g_i||_2 \\
    = & \medmath{\sum_{i=1}^{n} |{\lambda}_i - W_i| \cdot ||\mathbb{E}_{r \in D_i}[\nabla l(w^r; r)]\cdot \min(1, \frac{L}{||\mathbb{E}_{r \in D_i}[\nabla l(w^r; r)]||_1})||_2}
\end{align*}
Because $\sup_{g_1,...,g_n} {bias}(\tilde{g}_{{\boldsymbol{\lambda}}}; \boldsymbol{\epsilon},\boldsymbol{d}) = \sum_{i=1}^{n}|{\lambda}_i - W_i|L$, the objective function of Problem \ref{problem:error_mini} is equal to
\begin{equation*}
             \min_{{\boldsymbol{\lambda}}=\textsf{Aggr}(\boldsymbol{\epsilon},\boldsymbol{d})}  ERR(\tilde{g}_{{\boldsymbol{\lambda}}}; \boldsymbol{\epsilon},\boldsymbol{d})  =  \sum_{i=1}^{n}({\lambda}_i)^2 \sigma_i + (\sum_{i=1}^{n}|{\lambda}_i - W_i|L)^2.
\end{equation*}

\subsubsection*{Problem transformation}
We further transform Problem \ref{problem:error_mini} into a convex quadratic programming problem. 
First, to minimize the error bound, any data owner $i$ with $\epsilon_i=0$ must be allocated a zero-valued weight ${\lambda}_i=0$ by an optimal solver because its gradient $\tilde{g}_i$ has an infinite variance $\sigma_i$.
For simplicity, we assume that only the first $k\leq n$ data owners have positive privacy losses without loss of generality.
Then, we let $\boldsymbol{x}=[{\lambda}_1,...,{\lambda}_{k}]$
and replace the terms $|{\lambda}_i-W_i|, \forall i \in [k]$ with auxiliary variables $\boldsymbol{y} = [y_1,...,y_k]$ with the constraints $y_i \geq -({\lambda}_i-W_i), y_i \geq  {\lambda}_i-W_i, \forall i \in [k]$.
Consequently, we have the following quadratic programming problem \cite{stellato2020osqp}.
\begin{problem}[Equivalent problem of Problem \ref{problem:error_mini}]
\label{problem:psd_quad}
\begin{align*}
\small
    & \min_{\boldsymbol{x}, \boldsymbol{y}} \frac{1}{2} 
    \begin{bmatrix}
    \boldsymbol{x} \\ \boldsymbol{y}
    \end{bmatrix}^T 
     \begin{bmatrix}
    Diag([\sigma_1,...,\sigma_k]) & 0\\ 0 & Uni(L^2)_{k\times k}
    \end{bmatrix}
    \begin{bmatrix}
    \boldsymbol{x} \\ \boldsymbol{y}
    \end{bmatrix} \\
    & \text{S.t.: }   
     \begin{bmatrix}
    Uni(1)_{k\times 1} \\ Uni(0)_{k\times 1}
    \end{bmatrix}^T 
    \begin{bmatrix}
    \boldsymbol{x} \\ \boldsymbol{y}
    \end{bmatrix} = 1,
    \begin{bmatrix}
    \boldsymbol{I}_{k} & -\boldsymbol{I}_{k} \\ -\boldsymbol{I}_{k} & -\boldsymbol{I}_{k}
    \end{bmatrix}
    \begin{bmatrix}
    \boldsymbol{x} \\ \boldsymbol{y}
    \end{bmatrix}
    \leq 
     \begin{bmatrix}
   \boldsymbol{W} \\ -\boldsymbol{W}
    \end{bmatrix}
\end{align*}
where $\boldsymbol{I}_{k}$ is a $k\times k$ identity matrix,
$Uni(a)_{m\times n}$ is an $m\times n$ matrix where all the elements are equal to $a\in R$, 
$Diag([\sigma_1,...,\sigma_k])$ is a $k \times k$ diagonal matrix with $Diag([\sigma_1,...,\sigma_k])[i][i]\!=\!\sigma_i, \forall i \in [k]$,
and $\boldsymbol{W}\!= \![{W}_1,...,{W}_k]$.
\end{problem}

\begin{algorithm}[t]
\small
\caption{Aggregation Mech.: OptAggr}
\begin{algorithmic}[1]
\REQUIRE privacy losses $\epsilon_1,...,\epsilon_n$, data sizes $d_1,...,d_n$
\ENSURE aggregation weights ${\lambda}_1,...,{\lambda}_n$
\RETURN ${\lambda}_i=W_i, \forall i$ if $\epsilon_i=0, \forall i$
\STATE For each data owner $i$ with $\epsilon_i=0$, let ${\lambda}_i=0$
\STATE For each data owner $j$ with $\epsilon_j>0$, calculate the variance $\sigma_j$; then compute ${\lambda}_j, \forall j$ using an optimizer that solves Problem \ref{problem:psd_quad}.
\RETURN ${\lambda}_1,...,{\lambda}_n$
\end{algorithmic}
\label{alg:optaggr}
\end{algorithm}
\vspace{-6pt}

Because Problem \ref{problem:psd_quad} is a convex quadratic programming problem, it can be well solved by many existing solvers in polynomial time, e.g., the SCS solver \cite{scs} to be used in our experiments.
Note that there is no existing analytical solution to Problem \ref{problem:psd_quad} to the best of our knowledge.
Hence, we propose the OptAggr mechanism that (1) allocates zero-valued aggregation weights to those data owners with zero-valued privacy losses and (2) then computes other data owners' aggregation weights by solving Problem \ref{problem:psd_quad} with a polynomial-time optimizer, as depicted in Algorithm \ref{alg:optaggr}.

\begin{proposition}
Problem \ref{problem:psd_quad} is a convex quadratic programming problem and is equivalent to Problem \ref{problem:error_mini}.
\end{proposition}

\begin{proof}
Let $\mathcal{Q}=\medmath{
\begin{bmatrix}
    Diag([\sigma_1,...,\sigma_k]) & 0\\ 0 & Uni(L^2)_{k\times k}
\end{bmatrix}}$ 
and $A=\medmath{
\begin{bmatrix}
    Diag([\sqrt{\sigma_1},...,\sqrt{\sigma_k}]) & 0\\ 0 & Uni(\frac{L}{\sqrt{k}})_{k\times k}
\end{bmatrix}}$.
Because $\mathcal{Q}=A^{T} A$, $\mathcal{Q}$ is a positive semidefinite matrix. Therefore, Problem \ref{problem:psd_quad} is a convex quadratic programming problem.

For each $y_i$, a solver for Problem \ref{problem:psd_quad} will find the lowest value of $y_i$ as possible. Therefore, if ${\lambda}_i - W_i \geq 0$, the constraint $y_i\geq {\lambda}_i - W_i$ is equivalent to $y_i = {\lambda}_i - W_i$ and implies $y_i\geq -({\lambda}_i - W_i)$; if ${\lambda}_i - W_i \leq 0$, the constraint $y_i\geq -({\lambda}_i - W_i)$ is equivalent to $y_i=-({\lambda}_i - W_i)$ and implies $y_i\geq {\lambda}_i - W_i$. Therefore, the constraints $y_i\geq {\lambda}_i - W_i$ and $y_i\geq -({\lambda}_i - W_i)$ are equivalent to $y_i = |{\lambda}_i - W_i|$. Therefore, we conclude that Problem \ref{problem:psd_quad} is equivalent to Problem \ref{problem:error_mini}.
\end{proof}

\section{Auction Mechanism: DM-RegretNet}
\label{sec:ac}

In this section, we design a truthful mechanism that maximizes the global gradient's utility jointly with the OptAggr mechanism.
Since OptAggr does not provide an analytical solution to Problem \ref{problem:psd_quad}, the objective function also cannot be expressed in an analytic form, which makes it extremely difficult to characterize and design an optimal truthful mechanism.
To design a truthful mechanism that optimizes the nonanalytical objective, we turn to an automated mechanism design approach that achieves an auction objective by ML.
We also propose a traditional auction mechanism in \venueforappendix{\ref{sec:all-in}}.

\begin{figure*}[t]
\vspace{-12pt}
\begin{minipage}[t]{0.6\linewidth}
    \centering
    \includegraphics[scale=0.3]{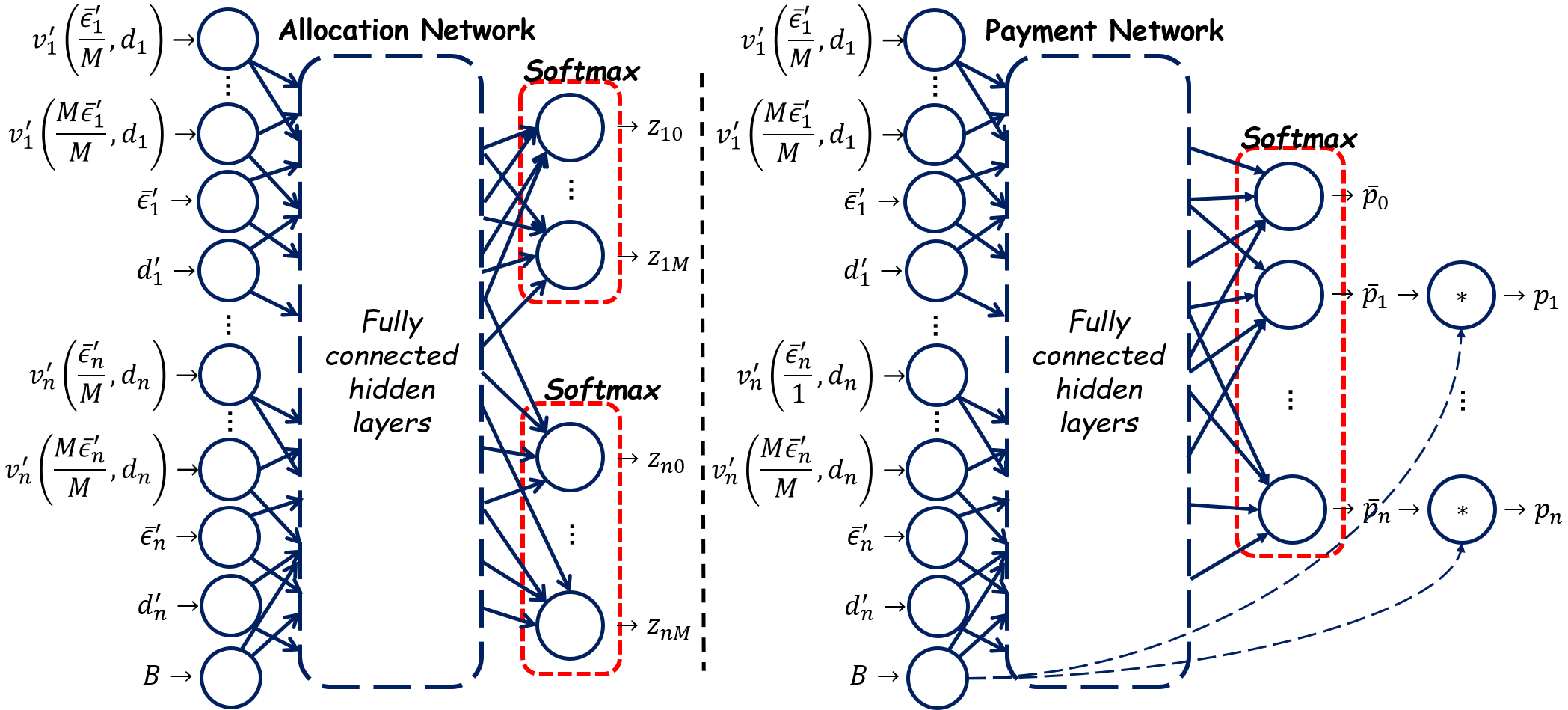}
    \caption{M-RegretNet.}
    \label{fig:multi-regretnet}
    \end{minipage}%
    \begin{minipage}[t]{0.4\linewidth}
    \centering
    \includegraphics[scale=0.3]{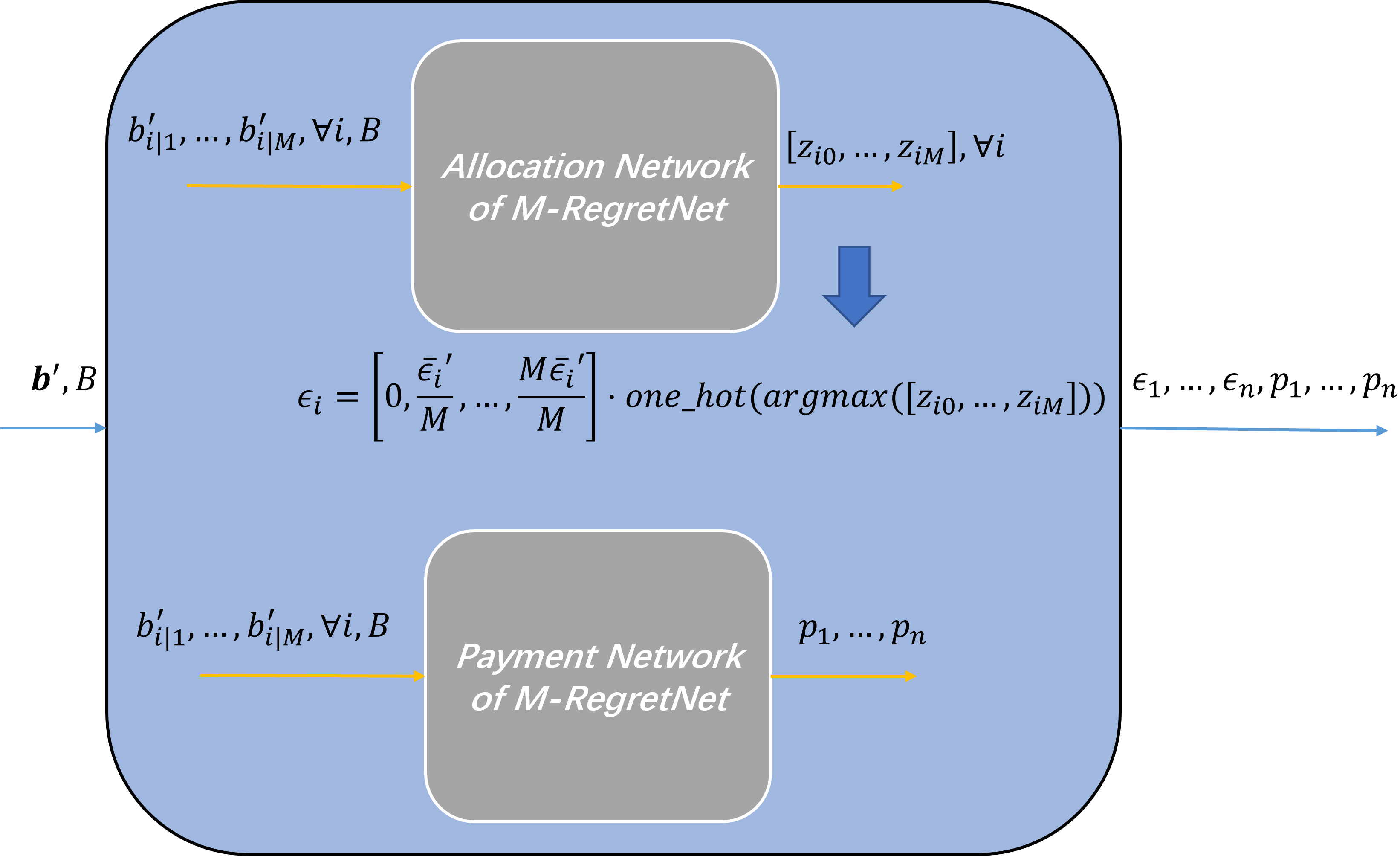}
    \caption{DM-RegretNet.}
    \label{fig:dm-regretnet}
    \end{minipage}
    \vspace{-12pt}
\end{figure*}

\begin{figure}[ht]
    \centering
    \includegraphics[scale=0.45]{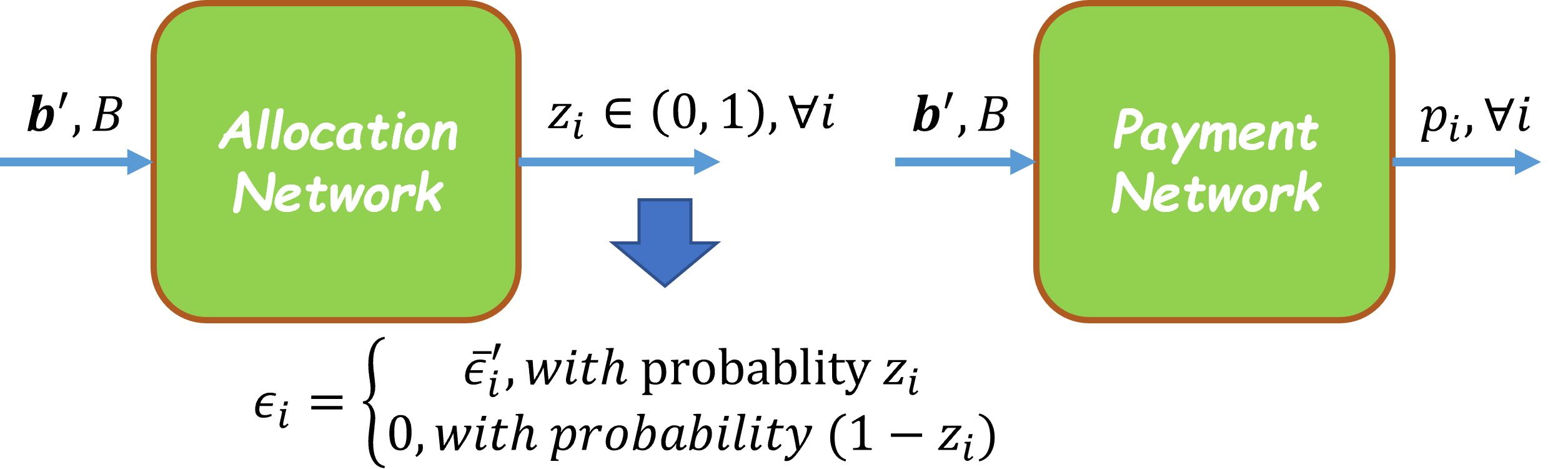}
    \caption{RegretNet.}
    \label{fig:regretnet}
            \vspace{-10pt}
\end{figure}

\subsubsection*{RegretNet}
We seek support from RegretNet \cite{dutting2019optimal}, the state-of-the-art automated mechanism design framework for multi-item auctions. 
As depicted in Figure \ref{fig:regretnet}, RegretNet consists of two deep learning networks: an allocation network and a payment network. 
Both the networks take as input data owners' bid profile $\boldsymbol{b}'$ and the buyer's financial budget $B$ but output allocation probabilities $z_i\in (0,1), \forall i$ and payments $p_1,...,p_n$, respectively.
Therefore, RegretNet is a randomized auction mechanism: the allocation result of each data owner's privacy loss is a binary random variable $\epsilon_i$ with $Pr[\epsilon_i=\bar{\epsilon}'_i]=z_i$ and $Pr[\epsilon_i=0]=1-z_i$.\footnote{The concrete privacy loss $\epsilon_i$ to be used to perturb the local gradient is a sample of the random variable. To reduce notational overload, we use $\epsilon_i$ to denote the random variable in Section \ref{sec:ac}.}
Then, the truthfulness constraint is approximately guaranteed by model training: the violation degree of truthfulness is quantified as a regret penalty in the training objective function to be minimized.

\subsubsection*{Problems with RegretNet}
RegretNet may perform poorly in our auction problem.
First, RegretNet can only auction single-unit items and output binary auction results.
That is, under RegretNet, the allocation result of each data owner $i$'s privacy loss (i.e., the item $\epsilon_i$) is either to purchase the whole unit (i.e., $\epsilon_i=\bar{\epsilon}'_i$) or not to purchase any privacy loss (i.e., $\epsilon_i = 0$).
However, we should support trading a portion of the privacy budget $\bar{\epsilon}'_i$ to flexibly optimize the global gradient's utility.
Second, some extra variance from the randomness of the allocation results by RegretNet might be introduced into the perturbed local gradients.
Third, also because of the allocation randomness, RegretNet cannot treat the (expected) error bound minimization function as the objective function for model training.
Intuitively, RegretNet always allocates zero-valued privacy losses for all data owners with probability $Pr[\epsilon_1=...=\epsilon_n=0]=\prod_{i=1}^n(1-z_i)$, which means that the expected error bound of the perturbed global gradient $\tilde{g}_{{\boldsymbol{\lambda}}}$ remains infinite and cannot be minimized.

\subsubsection*{M-RegretNet}
To solve the first problem with RegretNet, we extend the allocation network of RegretNet and propose \textit{M-RegretNet} (Multi-Unit RegretNet). 
As shown in Figure \ref{fig:multi-regretnet}, like RegretNet, M-RegretNet has an allocation (payment) network with multiple fully connected hidden layers, each of which has multiple hidden nodes with \textit{tanh} activations.
However, it does not take the reported bids as input.
Instead, for each data owner $i$, we transform her reported bid $b_i'=(v_i', \bar{\epsilon}_i', d_i)$ into $M$ sub-bids $b_{i|1}',...,b_{i|M}'$ and then input them into M-RegretNet, where $b_{i|m}'=(v_i'(\frac{m\cdot\bar{\epsilon}_i'}{M}, d_i), \frac{m\cdot \bar{\epsilon}_i'}{M}, d_i), \forall m\in[M]$.  
Regarding the payment network, it first generates \textit{budget fractions} $\bar{p}_0,...,\bar{p}_n$ and then output payments $p_i = \bar{p}_i \cdot B, \forall i \in [n]$.
Because the budget fractions are output by a \textit{softmax} activation function, the sum of the payments $\sum_{i\in [n]} p_i$ never exceeds the financial budget $B$, which ensures BF.
Then, the allocation network outputs $\boldsymbol{z}_i = [z_{i0}, z_{i1},...,z_{iM}]$ for each data owner $i$, where $z_{i0}$ denotes the probability of data owner $i$ losing the auction, and $z_{im}$ is the probability of data owner $i$ winning with her sub-bid $b_{i|m}'$. 
Since each owner $i$ should win with at most one sub-bid, we apply softmax activation functions to ensure that $\sum_{m=0}^{M} z_{im} = 1, \forall i$. 
Therefore, the allocation result for each data owner $i$ is a random variable $\epsilon_i$ with $M+1$ possible values, i.e., $Pr[\epsilon_i = \frac{m\cdot\bar{ \epsilon}_i'}{M}] = z_{im}, \forall m \in \{0, 1, ..., M\}$.
When $M=1$, M-RegretNet reduces to a budget-feasible version of RegretNet;
when $M \geq 2$, it enables the buyer to only purchase a part of each data owner's privacy budget.
In addition, when $M$ increases, it becomes easier for M-RegretNet to approximate the truthfulness and IR guarantees since it has more possible values to allocate as privacy losses.

\subsubsection*{DM-RegretNet}
To address the second and third problems with RegretNet, we further propose \textit{DM-RegretNet} that outputs deterministic allocation results.
DM-RegretNet deploys M-RegretNet as a module to determine allocation probabilities $\boldsymbol{z}_1,...,\boldsymbol{z}_n$ and payments $p_1,...,p_n$.
Then, it realizes deterministic allocation results by processing the vector of allocation probabilities $\boldsymbol{z}_i = [z_{i0},...,z_{iM}]$ into a one-hot vector; by such a process, there is only one one-valued allocation probability for each data owner $i$, and thus each privacy loss $\epsilon_i$ is deterministic.
Formally, it is
\begin{equation}
\label{eq:one-hot}
    \epsilon_i = \medmath{[0, \frac{1 \cdot \bar{\epsilon}_i'}{M},...,\frac{M \cdot \bar{\epsilon}_i'}{M}] \cdot one\_hot({argmax}(\boldsymbol{z}_i))}
\end{equation}
where $one\_hot(\cdot)$ is a function that takes as input an integer $m \in [0, M]$ and outputs an $(M+1)$-length one-hot vector where the $m$-th element equals $1$ and the others are zero-valued.
However, the function $one\_hot(argmax(\cdot))$ is nondifferentiable, which makes the networks untrainable.

\begin{algorithm}[h]
\small
\caption{Auction Mech.: DM-RegretNet}
\label{alg:dm}
\begin{algorithmic}[1]
\REQUIRE (reported) bid profile $\boldsymbol{b}'=(b_1',...,b_n')$, financial budget $B$, the number of sub-bids $M$, training=False
\ENSURE privacy losses $\epsilon_1,..,\epsilon_n$, payments $p_1,...,p_n$
\STATE Transform each data owner $i$ bid $b_i'=(v_i', \bar{\epsilon}_i',d_i)$ into $M$ sub-bids $b_{i|1}',...,b_{i|M}'$, where $b_{i|m}'=(v_i'(\frac{m\cdot \bar{\epsilon}_i'}{M}, d_i),\frac{m\cdot \bar{\epsilon}_i'}{M},d_i), \forall m \in [M]$
\STATE Input sub-bids, privacy budgets, and financial budget into M-RegretNet to obtain $\boldsymbol{z}_1,...,\boldsymbol{z}_n, p_1,...,p_n$ where $\boldsymbol{z}_i = [z_{i0},...,z_{iM}]$
\IF{training $==$ True}
\STATE  $\hat{\epsilon}_i = [0, \frac{1 \cdot \bar{\epsilon}_i'}{M},...,\frac{M \cdot \bar{\epsilon}_i'}{M}] \cdot {softmax}(\frac{\boldsymbol{z}_i}{\tau}), \forall i$
\RETURN $\hat{\epsilon}_1,...,\hat{\epsilon}_n, p_1,...,p_n$
\ELSE 
\STATE $\epsilon_i = [0, \frac{1 \cdot \bar{\epsilon}_i'}{M},...,\frac{M \cdot \bar{\epsilon}_i'}{M}] \cdot one\_hot(argmax(\boldsymbol{z}_i)), \forall i$
\RETURN $\epsilon_1,...,\epsilon_n, p_1,...,p_n$
\ENDIF
\end{algorithmic}

\end{algorithm}

To realize deterministic allocation results while ensuring trainable networks, we apply the soft argmax trick \cite{chapelle2010gradient} to DM-RegretNet. 
Then, as shown in Alg. \ref{alg:dm}, for the model inference phase, DM-RegretNet obtains deterministic allocation results by Equation \eqref{eq:one-hot}; for the model training phase, it uses the following differentiable estimator to approximate Equation \eqref{eq:one-hot}:
\begin{equation*}
    \hat{\epsilon}_i = [0, \frac{1 \cdot \bar{\epsilon}_i'}{M},...,\frac{M \cdot \bar{\epsilon}_i'}{M}] \cdot {softmax}(\boldsymbol{z_i}/\tau)
\end{equation*}
where $\tau$ is a smoothing parameter that controls the tradeoff between the estimator's approximation accuracy and smoothness. If we use a smaller $\tau$, the estimator $\hat{\epsilon}_i$ will approach the truth but become harder to optimize. 

Then, to further promote the approximation accuracy, we introduce the \textit{deterministic allocation} constraint when training DM-RegretNet, which requires that ${softmax}(\boldsymbol{z_i}/\tau)$ should be a one-hot vector.
Consider a vector $\boldsymbol{z}^{U} = [z^{U}_0,...,z^{U}_{M}]$ with uniform allocation probabilities, i.e., ${z}^{U}_{m} = \frac{1}{M+1}, \forall m \in [0, M]$.
Obviously, for a vector $\boldsymbol{z}=[z_{0},...,z_{M}]$ of allocation probabilities, the squared Euclidean distance between $\boldsymbol{z}$ and $\boldsymbol{z}^{U}$ is maximized only when $\boldsymbol{z}$ is a one-hot vector:
\begin{align*}
    \medmath{\sup_{\boldsymbol{z}} \sum_{m\in[0, M]}(z_{m}- z^{U}_m)^2 = (1-\frac{1}{M+1})^2 + M(0-\frac{1}{M+1})^2 = \frac{M}{M+1}}
\end{align*}
Then, we formalize the deterministic allocation constraint over the vector $\boldsymbol{z}_i'=[z_{i0}',...,z_{iM}']= {softmax}(\boldsymbol{z_i}/\tau)$ as:
\begin{equation*}
    {dav}_i(\theta)=\mathbb{E}_{(\boldsymbol{b}, B)} [ \frac{M}{M+1} - \sum_{m\in[0, M]}(z_{im}'- z^{U}_m)^2] = 0.
\end{equation*}
where $\theta$ is the network parameters of DM-RegretNet.
We note that $\boldsymbol{z}_i'$ is determined by the network parameters $\theta$ and the input $(b, B)$ to DM-RegretNet.

\subsubsection*{Training DM-RegretNet}
We train DM-RegretNet by solving Problem \ref{problem:auction}.
Concretely, given a (real) bid profile $\boldsymbol{b}$ and a financial budget $B$, we can obtain a global gradient:
\begin{equation*}
    \tilde{g}_{{\boldsymbol{\lambda}}, \theta}=\textsf{Aggr}(\boldsymbol{\hat{\epsilon}},\boldsymbol{d})\cdot[\mathcal{M}_{\hat{\epsilon}_1}(g_1),...,\mathcal{M}_{\hat{\epsilon}_n}(g_n)]
\end{equation*}
where the estimated privacy losses $\boldsymbol{\hat{\epsilon}}=[\hat{\epsilon}_1,...,\hat{\epsilon}_n]$ are affected by the network parameters $\theta$.
The training objective thus is to find the optimal network parameters that minimize the expected error bound $\mathbb{E}_{(\boldsymbol{b}, B)} [ERR(\tilde{g}_{{\boldsymbol{\lambda}}, \theta}; {\boldsymbol{\lambda}}=\textsf{Aggr}(\boldsymbol{\hat{\epsilon}},\boldsymbol{d}))]$.

Then, we relax the truthfulness constraint and quantify the violation degree of truthfulness for data owner $i$ by the \textit{expected regret} (normalized by the expected valuation of her allocated privacy loss $c_i^{\theta}(b_i; \boldsymbol{b}_{-i}, B)=\sum_{m=1}^{M} z_{im}' \cdot v_i(\frac{m\cdot \bar{\epsilon}_i}{M}, \bar{d}_i)$ under parameters $\theta$):
\begin{equation*}
    {rgt}_i(\theta) = \mathbb{E}_{(\boldsymbol{b}, B)}[\frac{\max(0, \max_{b_i'} u_i^{\theta}(b_i';\boldsymbol{b}_{-i}, B) - u_i^{\theta}(b_i; \boldsymbol{b}_{-i}, B))}{c_i^{\theta}(b_i; \boldsymbol{b}_{-i}, B)}]
\end{equation*}
where $u_i^{\theta}$ is data owner $i$'s utility function under network parameters $\theta$.
Similarly, the violation degree of the IR constraint can be measured by the \textit{expected IR violation}:
\begin{equation*}
  {irv}_i(\theta)=\mathbb{E}_{(\boldsymbol{b}, B)}[\frac{max(0, - u_i^{\theta}(b_i;\boldsymbol{b}_{-i}, B))}{c_i^{\theta}(b_i; \boldsymbol{b}_{-i}, B)}]
\end{equation*}
Therefore, we have the following optimization problem.
\begin{problem}[DM-RegretNet Training Problem]
\label{problem:dm}
\begin{align*}
        &\min_{\theta} \mathbb{E}_{(\boldsymbol{b}, B)} [ERR(\tilde{g}_{{\boldsymbol{\lambda}}, \theta}; {\boldsymbol{\lambda}}=\textsf{Aggr}(\boldsymbol{\hat{\epsilon}},\boldsymbol{d}))]\\
    \text{S.t.: } & {rgt}_i(\theta) = 0, \forall i \quad \text{(Truthfulness)}\\ 
                  &  {irv}_i(\theta) = 0, \forall i \quad \text{(Individual Rationality)}\\ 
                  &  {dav}_i(\theta) = 0, \forall i \quad \text{(Deterministic Allocation)}
\end{align*}
\end{problem}

We can empirically estimate the expected error bound and those violation degrees from some training data and solve an empirical version of Problem \ref{problem:dm} to train DM-RegretNet.
The details can be checked in \venueforappendix{\ref{sec:dm-regret-train}}.
The training data can be drawn from a known distribution or historical data.
Note that DM-RegretNet is trained offline before the execution of Algorithm \ref{alg:framework}; 
in each FL training round, the trained auction model makes a model inference to decide the auction result, which efficiently finishes in polynomial time.

\begin{figure*}[t]
\vspace{-10pt}
\centering
\subfigure{
\begin{minipage}[t]{.5\linewidth}
\includegraphics[scale=0.3]{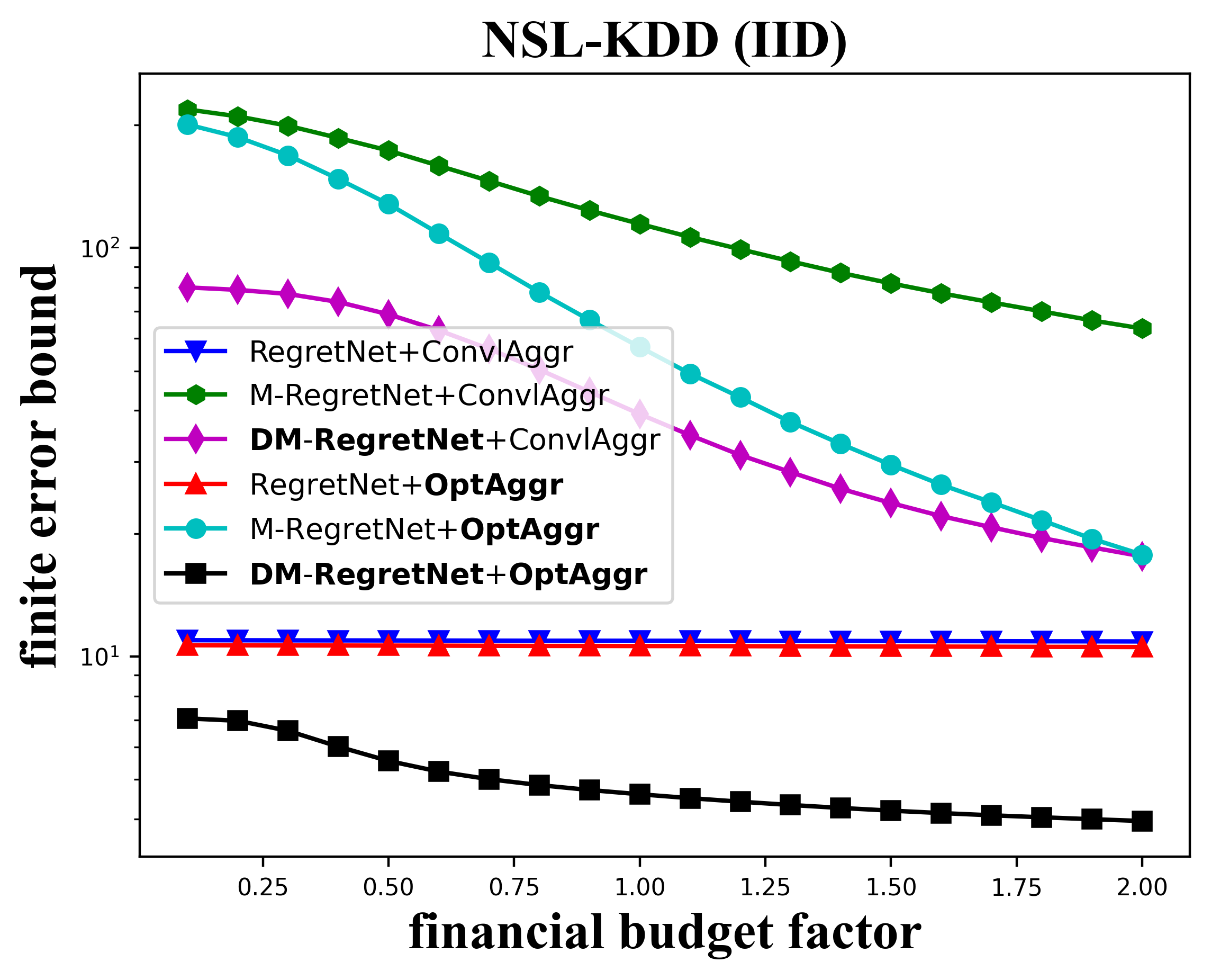}
\includegraphics[scale=0.3]{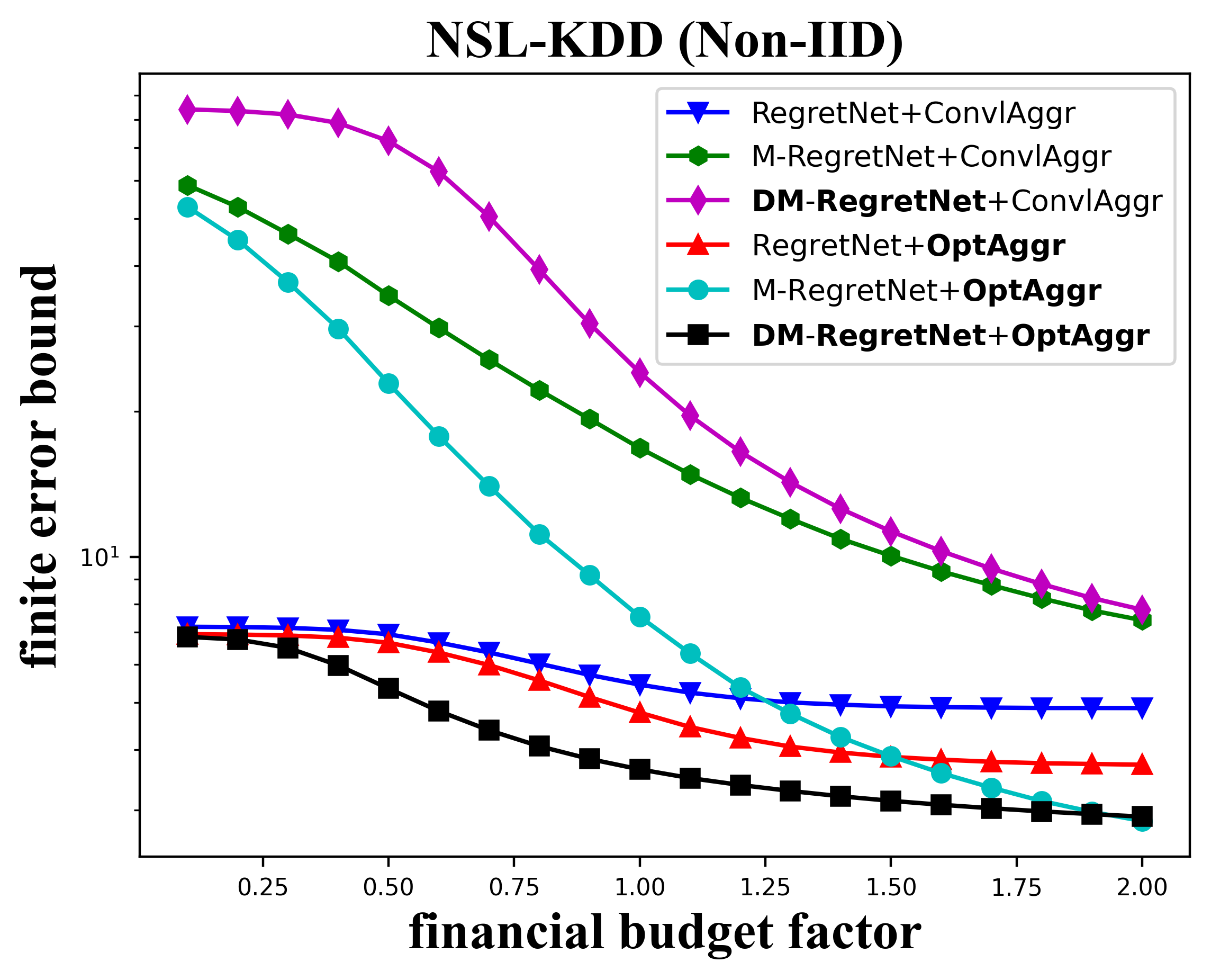}
\caption{Effect of financial budget on error bound.}
\label{fig:b_error}
\end{minipage}%
}%
\subfigure{
\begin{minipage}[t]{0.5\linewidth}
\includegraphics[scale=0.3]{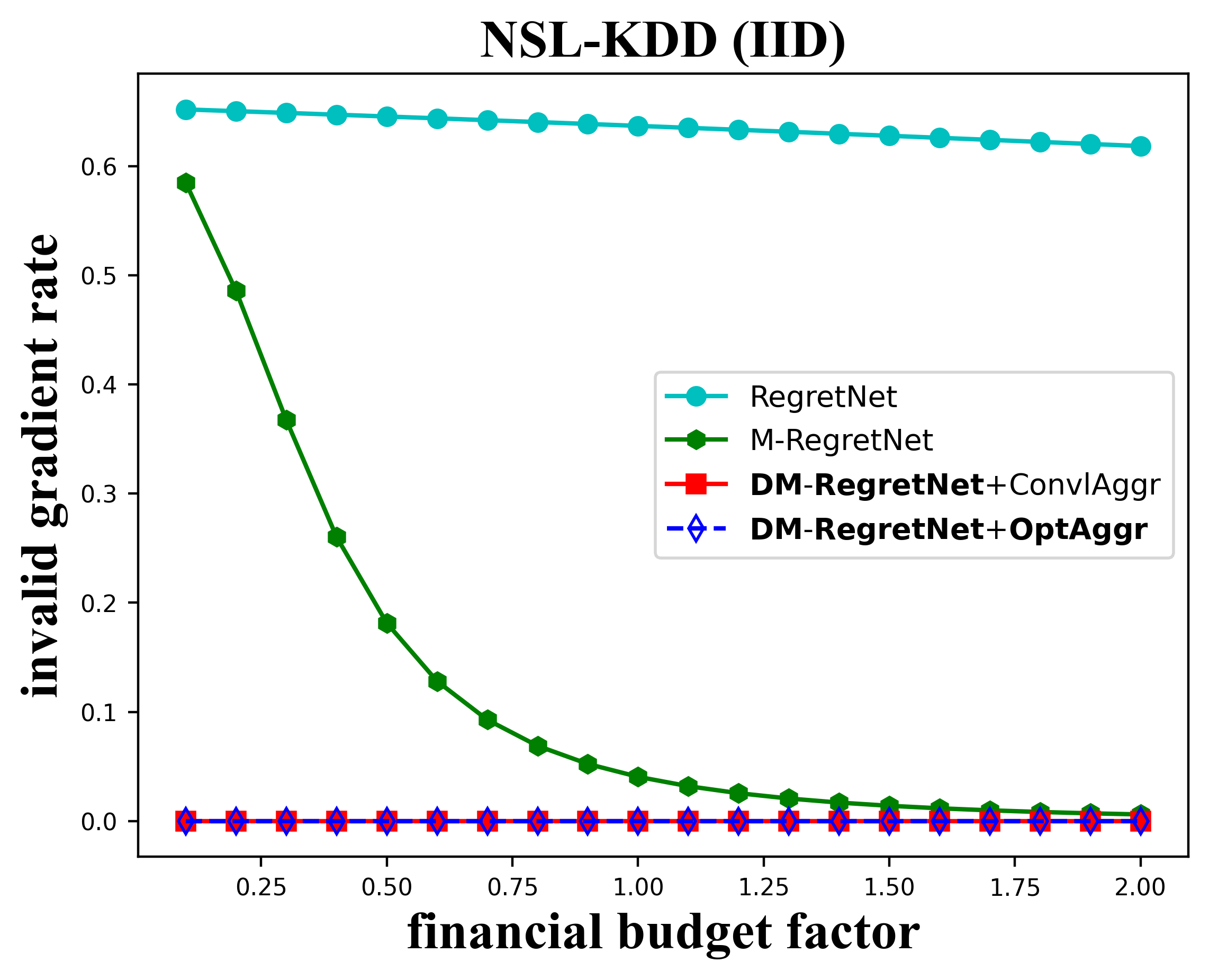}
\includegraphics[scale=0.3]{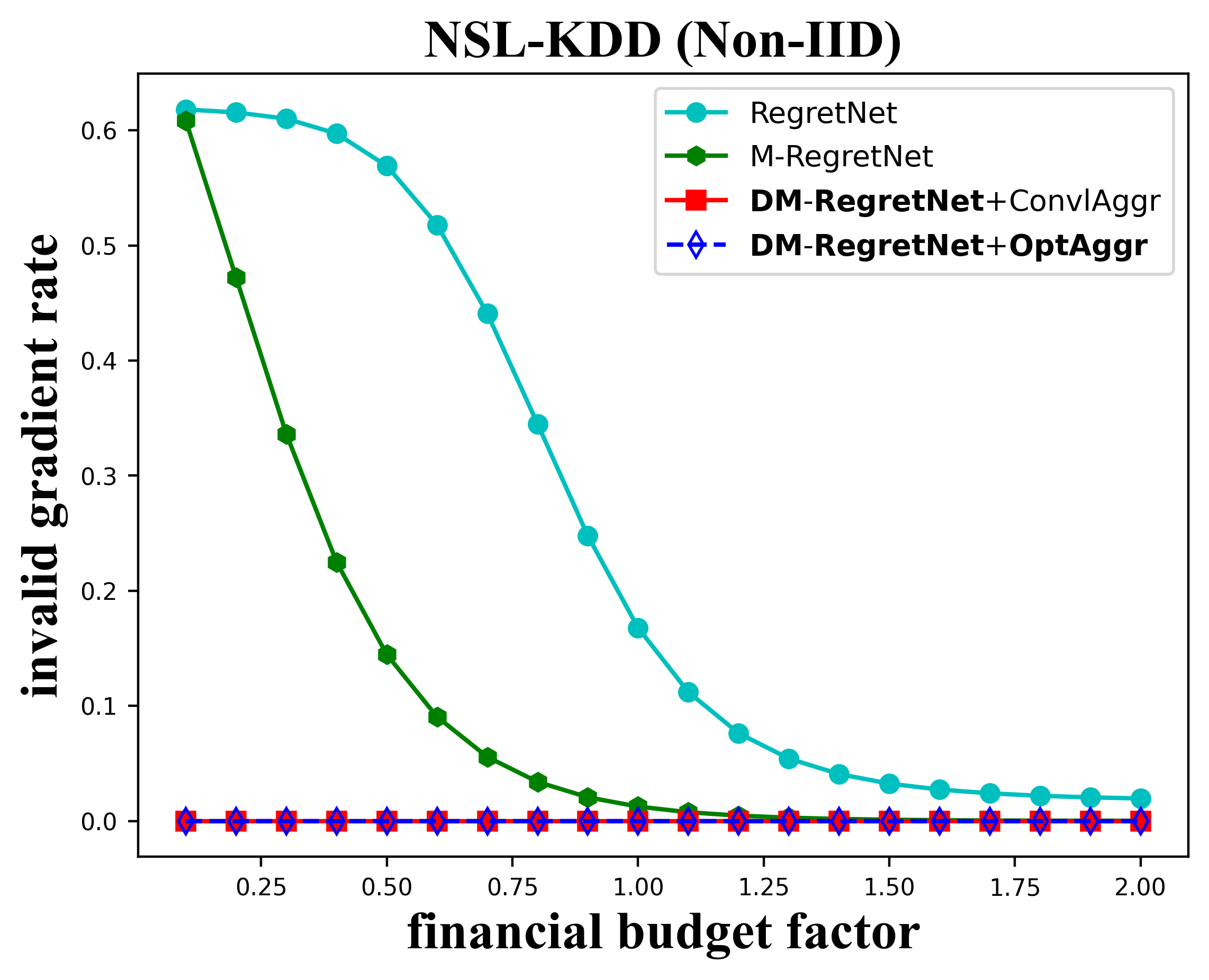}
\caption{Invalid gradient rate.}
\label{fig:b_invalid}
\end{minipage}%
}%
\vspace{-5pt}
\subfigure{
\begin{minipage}[t]{.5\linewidth}
\includegraphics[scale=0.3]{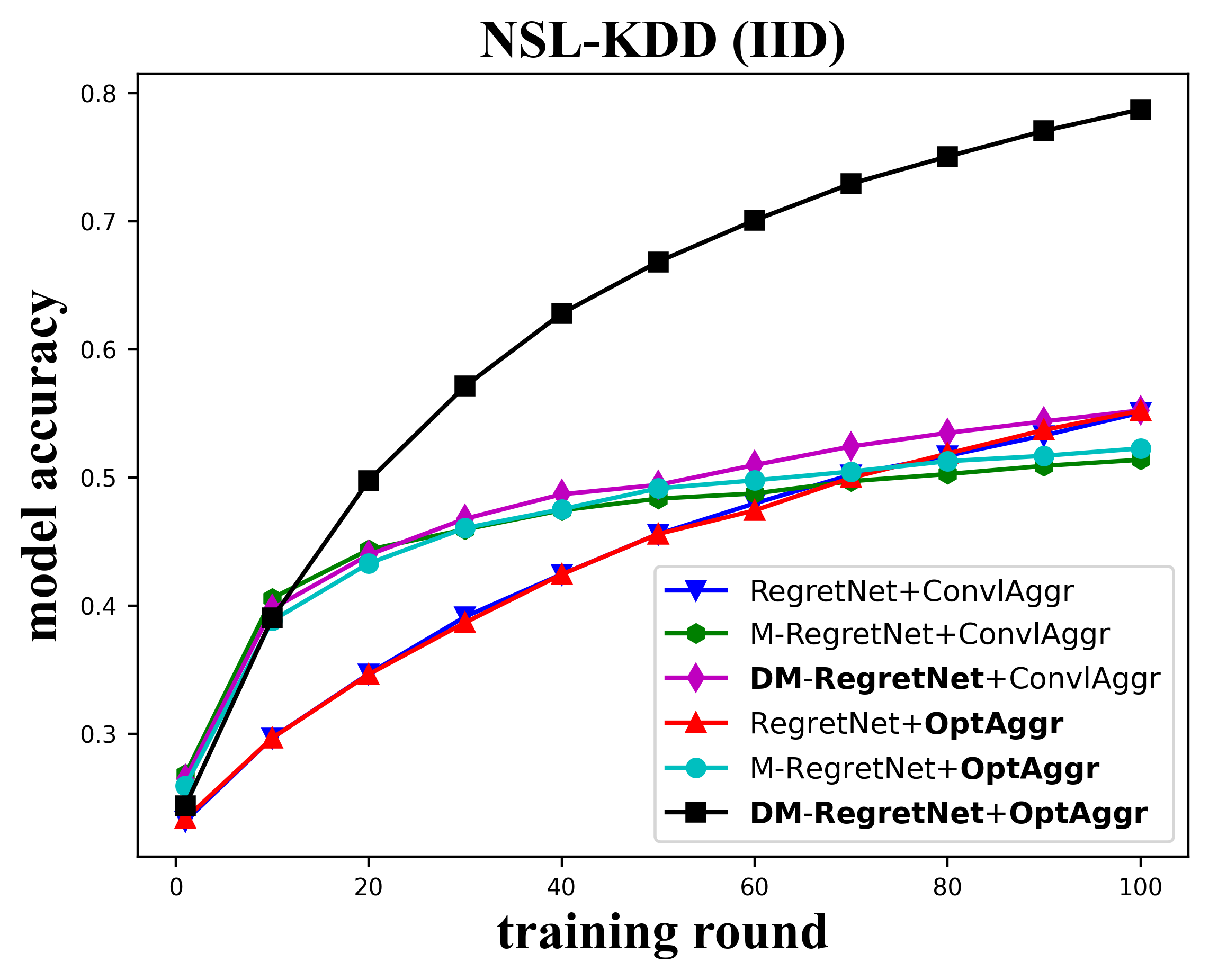}
\includegraphics[scale=0.3]{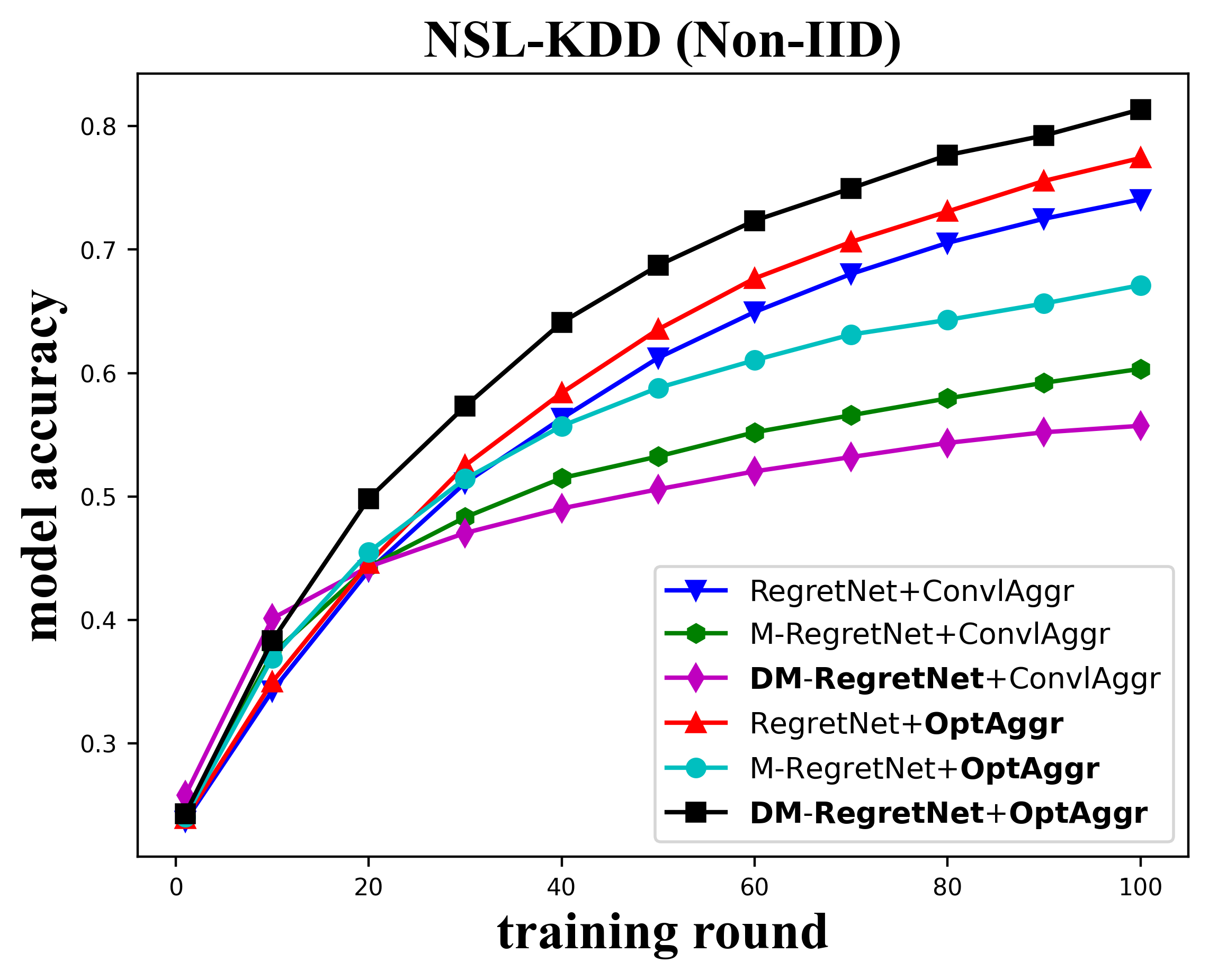}
\caption{Model accuracy over FL training rounds.}
\label{fig:b_acc}
\end{minipage}%
}%
\subfigure{
\begin{minipage}[t]{0.5\linewidth}
\includegraphics[scale=0.3]{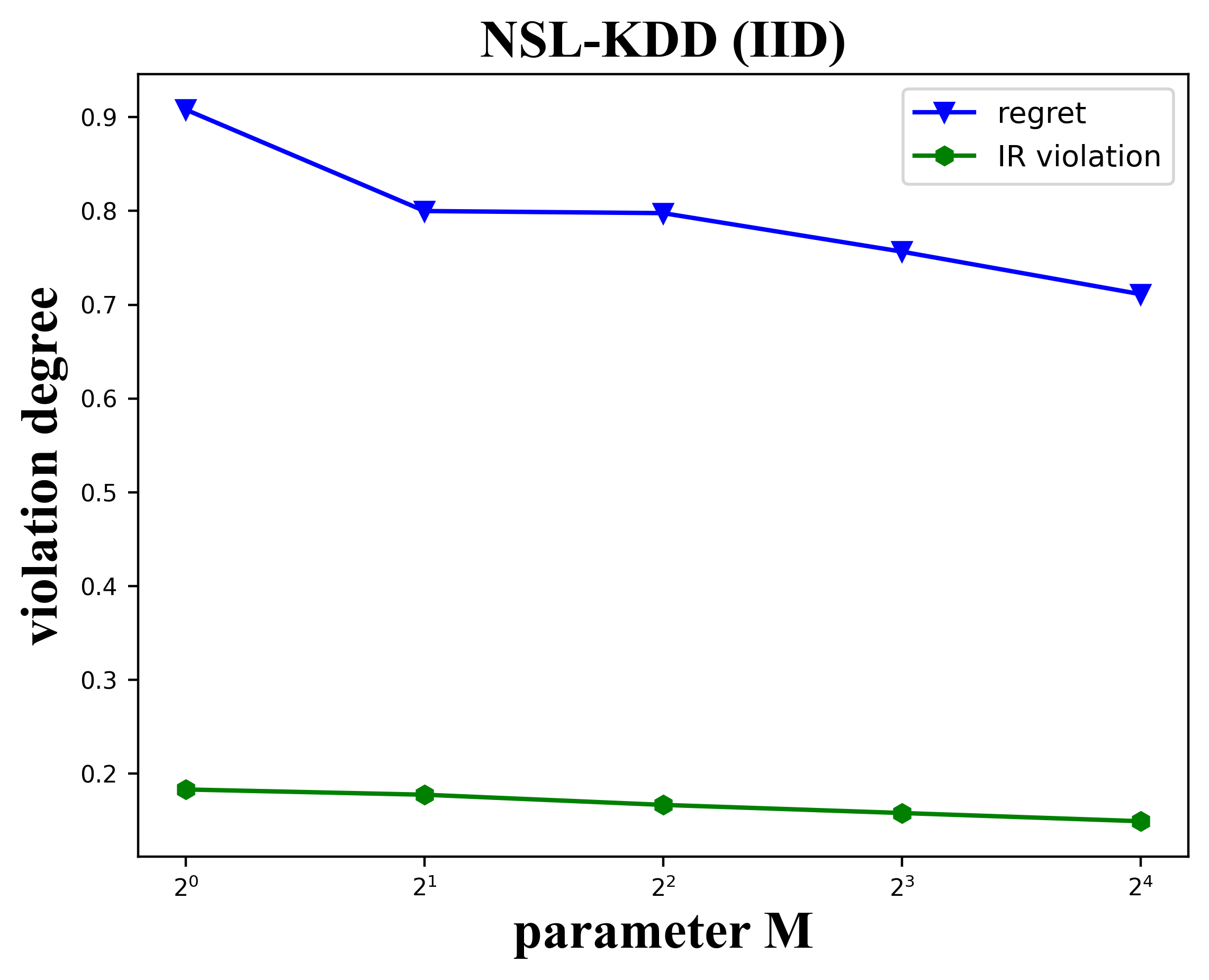}
\includegraphics[scale=0.3]{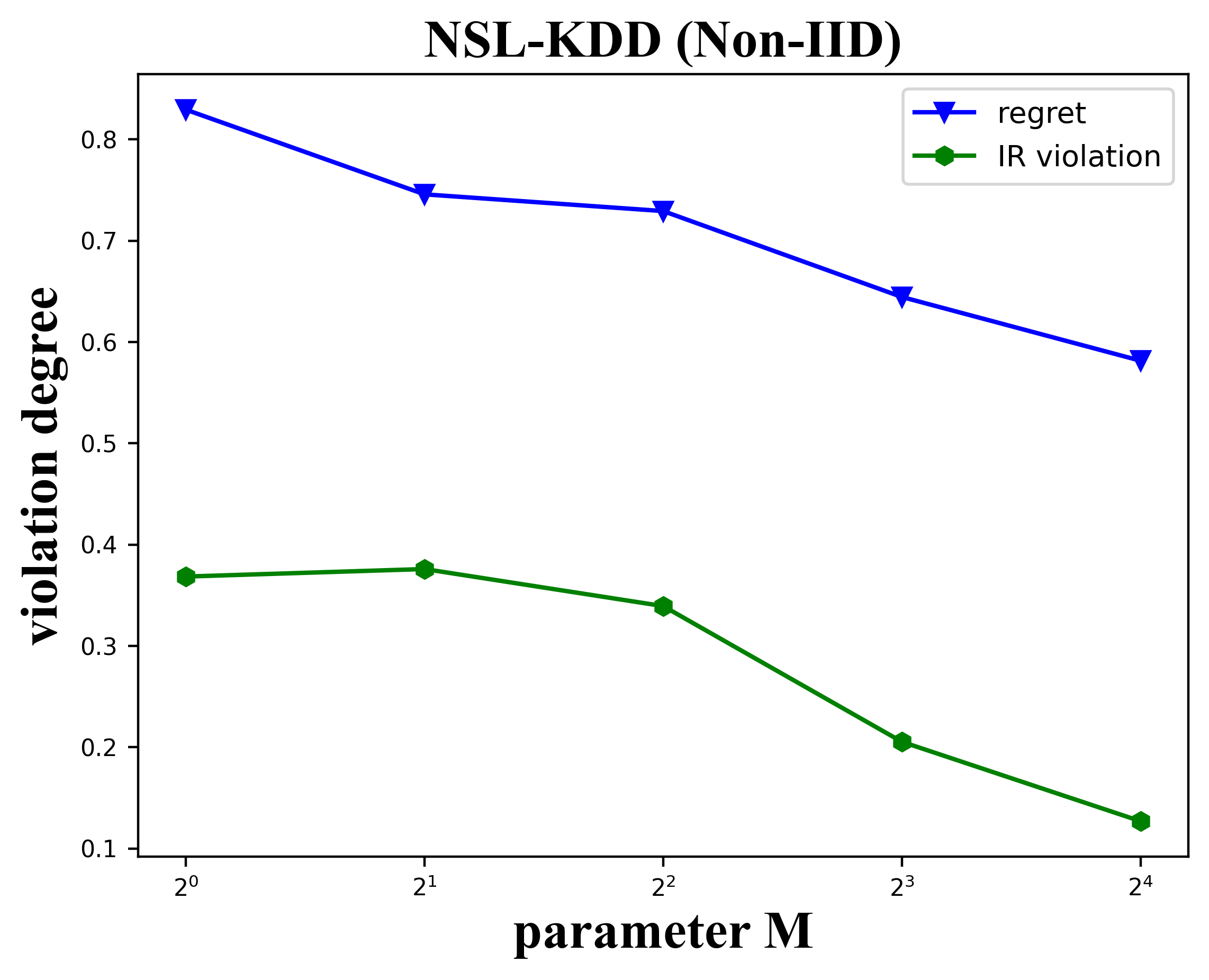}
\caption{Effect of parameter $M$.}
\label{fig:params}
\end{minipage}%
}%
\vspace{-10pt}
\end{figure*}

\section{Evaluation}
\label{sec:expr}

\subsection{Setup}
\subsubsection*{Research questions} 
We investigate the following research questions through experiments.
\begin{itemize}[leftmargin=*]
     \item RQ1: How does the proposed auction mechanism DM-RegretNet perform compared with the baselines (explained below) in terms of minimizing the error bound? 
     \item RQ2: Can OptAggr outperform the conventional aggregation method in FL?
     \item RQ3: How does DM-RegretNet approximately guarantee the truthfulness and IR constraints?
    \item RQ4:
    Does increasing $M$ benefit approximating the truthfulness and IR guarantees? 
 \end{itemize}

\subsubsection*{Baselines}
We compare OptAggr with the conventional aggregation method \textit{ConvlAggr} \cite{mcmahan2016federated}, which allocates positive aggregation weights only to those data owners with nonzero privacy losses, and the weights are proportional to their data sizes.
Regarding auction, we compare DM-RegretNet with RegretNet \cite{dutting2019optimal} and M-RegretNet.\footnote{Our code, data, and trained models are available at \url{https://github.com/teijyogen/FL-Market}.
We use the \textit{CVXPY} \cite{diamond2016cvxpy} and \textit{cvxpylayers} \cite{cvxpylayers2019} libraries to implement the OptAggr aggregation mechanism.
}

\subsubsection*{FL settings}
We use real data to train FL models. 
We choose logistic regression classifiers as FL models and use the NSL-KDD \cite{tavallaee2009detailed} datasets for $5$-class classification with $125973$ training samples and $22544$ test samples.
We distribute the training samples among $1000$ data owners to form their local datasets using the following partition methods:
\begin{itemize}[leftmargin=*]
    \item \textbf{IID}: We follow \cite{li2020federated} to draw all the local datasets from the same distribution, and their sizes follow a power law.
    \item \textbf{Non-IID}: We follow \cite{yurochkin2019bayesian} to allocate each class of samples among clients according to the Dirichlet distribution.
\end{itemize}
We set the learning rate $\eta = 0.01$ and the threshold $L=1.0$ for gradient clipping and perturb local gradients by the Laplace mechanism \cite{dwork2006calibrating}. 

\subsubsection*{Auction settings}
For each run of the experiment, we simulate $100$ rounds of FL and generate $1000$ data owners; in each round, we randomly select $10$ data owners as bidders in the auction.
To simulate various types of bids, we let each bidder randomly select a basic valuation function from four provided: a linear function $v^L(\epsilon_i, d_i)=2\cdot d_i \cdot \epsilon_i$, a quadratic function $v^Q(\epsilon_i, d_i)=d_i \cdot (\epsilon_i)^2$, a square-root function $v^S(\epsilon_i, d_i)=2\cdot d_i \cdot \sqrt{\epsilon_i}$, and an exponential function $v^E(\epsilon_i, d_i)=d_i \cdot (\exp(\epsilon_i)-1)$, which are natural choices considered in \cite{ghosh2015selling}; these functions are directly proportional to the data size $d_i$ because it is natural to model the valuation of a dataset as the sum of the valuations of the data records therein.
Then, we consider each owner's valuation function to be a randomly selected rate $\alpha\in[0.5, 1.5]$ of the selected function, e.g., $v_i(\epsilon_i, d_i)=\alpha\cdot v^L(\epsilon_i, d_i)$. 
Finally, we randomly generate each data owner's privacy budget $\bar{\epsilon}_i \in [0.5, 2.0]$, which is in line with those commonly used in the differential privacy research community.
For DM-RegretNet and M-RegretNet, we set $M=8$ by default. 
We train all the auction models on $102,400$ bid profiles with $50$ epochs. 

\subsubsection*{Evaluation metric}
To evaluate the utilities of the global gradients, we use as evaluation metrics the expected empirical error bound $\hat{ERR}$, the model accuracy (i.e., the percentage of correctly predicted examples), and the \textit{invalid gradient rate} (i.e., the frequency of sampling zero-valued privacy losses for all data owners).
Then, to evaluate the truthfulness and IR guarantees of the auction mechanisms, we use the empirical regret $\hat{rgt}_i$ and empirical IR violation $\hat{irv}_i$ as the metrics.
The definitions of the above metrics can be found in \venueforappendix{\ref{sec:dm-regret-train}}.


\subsection{Experimental Results}

\subsubsection*{Incentive mechanisms comparison (RQ1)}
First, we test the auction mechanisms' performance in minimizing the error bound.
We vary the financial budget factor $\bar{B}$ and let the budget $B=\bar{B}\cdot \sum_{i\in[n]} v_i(\bar{\epsilon}_i, \bar{d}_i)$.
As shown in Figure \ref{fig:b_error}, our DM-RegretNet can generate global gradients with a lower error bound in expectation.
When the financial budget factor increases and exceeds $1.0$, which means the budget covers the gross valuation of the bidders' privacy budgets, the error bound may still be able to decrease since the payments made by a truthful auction mechanism are usually much higher than the winners' valuations.
We note that since the randomized mechanisms RegretNet and M-RegretNet may sample zero-valued privacy losses for all data owners, which results in \textit{invalid global gradients} with infinite error, we only take the error bound of valid gradients into account.
That means that even if Figure \ref{fig:b_error} shows that RegretNet results in low error bounds, it actually frequently generates invalid gradients with infinite error, while our DM-RegretNet based mechanisms never do, which is depicted in Figure \ref{fig:b_invalid}.
For the rest experiments, we sample the budget factor uniformly at random from $[0.1, 2.0]$.
We also test the model accuracy over $100$ FL training rounds.
As shown in Figure \ref{fig:b_acc}, in both cases, DM-RegretNet makes better auction decisions that result in more accurate models.


\subsubsection*{Aggregation mechanisms comparison (RQ2)}
As depicted in Figure \ref{fig:b_error}, under each auction mechanism, our OptAggr aggregation mechanism can always generate global gradients with a lower error bound in expectation than ConvlAggr.
In addition, Figure \ref{fig:b_acc} shows that model buyers can obtain more accurate models using global gradients aggregated by OptAggr.
Therefore, OptAggr outperforms ConvlAggr.

\begin{table}[t]
\small
    \centering
    \caption{Comparisons of the violation degrees of truthfulness and IR of RegretNet-based mechanisms. At each box, the two numbers are the empirical regret and IR violation, respectively.}
    \begin{tabular}{|l|l|l|}
    \hline
        ~ & IID & Non-IID \\ \hline
        RegretNet & 0.9351, 0.1684 & 0.8164, 0.3864 \\ \hline
        M-RegretNet & 0.7715, 0.1508 & 0.6652, 0.2020 \\ \hline
        DM-RegretNet+ConvlAggr & 0.0617, 0.0251 & 0.0516, 0.0210 \\ \hline
        DM-RegretNet+OptAggr & 0.0556, 0.0265 & 0.0428, 0.0259 \\ \hline
    \end{tabular}
    \label{table:guarantees}
    \vspace{-3pt}
\end{table}

\subsubsection*{Incentive guarantees (RQ3).}
Table \ref{table:guarantees} illustrates the violation degrees of truthfulness and IR of those RegretNet-based auction mechanisms.
The empirical regrets and IR violations under DM-RegretNet are significantly lower than those under RegretNet and M-RegretNet, which means that DM-RegretNet has stronger abilities to approximate the truthfulness and IR constraints.
DM-RegretNet has this advantage because it is a deterministic mechanism that universally guarantees truthfulness and IR, while RegretNet and M-RegretNet are randomized mechanisms that approximate the two constraints by expectation.

\subsubsection*{Parameter effects (RQ4)}
We vary the value of parameter $M \in \{1, 2, 4, 8, 16\}$ to test its effects on the truthfulness and IR guarantees.
For each value, we train $10$ instances of M-RegretNet and test them to obtain the average result.
Figure \ref{fig:params} shows that under M-RegretNet, an increase in $M$ decreases both the regret and IR violation, which demonstrates our intuition that a larger $M$ can enhance the abilities of M-RegretNet to approximate the truthfulness and IR guarantees as it has more possible values to allocate as privacy losses.
\section{Related Work}
\label{sec:related_work}

\subsubsection*{Incentive mechanisms for FL}
Many incentive mechanisms \cite{jiao2020toward, richardson2019rewarding, wang2020principled, sarikaya2019motivating, kang2019incentive, kang2020reliable, zhan2020learning, zhang2021incentive} have been proposed to encourage participation in FL by providing appropriate rewards for data owners' contributions. 
The contributions can be evaluated in various ways.
For example, Zhan et al. \cite{zhan2020learning} consider the data size, the most basic measurement of data, for contribution evaluation. 
Then, from a cost-covering perspective, Jiao et al. \cite{jiao2020toward} propose an auction mechanism where data owners can bid their computational and communication costs in providing their FL training services.
Similarly, Sarikayar et al. \cite{sarikaya2019motivating} regard the CPU computational costs as their contributions.
Then, Richardson et al. \cite{richardson2019rewarding} evaluate data owners' influences on the model accuracy to decide their rewards. 
The Shapley value is also adapted into an FL version by Wang et al. \cite{wang2020principled} to value data owners' influence. 
Data quality is another natural choice.
Since the data quality is known only to data owners, to ensure the contribution of high-quality data, Kang et al. \cite{kang2019incentive} design different types of rewarding contracts to distinguish data owners such that the FL server can infer the data quality based on the contracts they select.
In this way, the rewards are essentially determined by the data quality.
In addition, both the works of Kang et al. \cite{kang2020reliable} and Zhang et al. \cite{zhang2021incentive} employ some reputation metric to remove unreliable data owners from FL.
However, none of the above mechanisms considers privacy protection, which is also a critical incentive. 
To fill this gap, we propose an auction-based incentive mechanism that protects data owners' privacy and compensates them according to their privacy preferences.

\subsubsection*{FL under LDP}
Some efforts \cite{li2019differentially, wang2020federated, sun2020ldp, liu2020fedsel, liu2020flame, zhao2020local, wu2022fedctr, girgis21shuffled} have devoted to designing FL frameworks under LDP. 
Since the data perturbation under LDP may substantially reduce the utility of FL models, these authors mainly focus on how to reduce the perturbation level while still providing appropriate privacy guarantees. 
Concretely, to relieve the utility problem that the noise that LDP injects into a gradient should be proportional to its size, Liu et al. \cite{liu2020fedsel} propose an FL framework to perturb only the top-k important dimensions of the gradient and thus can its utility. 
Then, Liu et al. \cite{liu2020flame} and Girgis et al. \cite{girgis21shuffled} employ the shuffle model \cite{erlingsson19ampli} in their FL frameworks to amplify the privacy guarantee under the same level of noise injection. 
Then, Sun et al. \cite{sun2020ldp} propose a more secure LDP mechanism that can extend the difference between the perturbed data and its original value while introducing lower variance. 
There are also works on designing LDP-based FL frameworks for specific ML tasks \cite{li2019differentially, wang2020federated, wu2022fedctr}. 
While prior works address the utility problem under LDP by relaxing the privacy guarantee or elaborately injecting noise, we tackle it from an incentive perspective, i.e., by incentivizing data owners to contribute more privacy loss, which can also increase utility.
In addition, Zhao et al. \cite{zhao2020local} propose an LDP-based FedSGD algorithm, which is similar to our protocol in privacy protection; however, they assume uniform privacy losses for all data owners and thus do not consider different perturbation levels when aggregating gradients.

\section{Conclusion and Future Work}
In this paper, we propose FL-Market to facilitate trustworthy data acquisition for ML-based data analytics.
Our mechanisms can incentivize data sharing by providing preferred levels of local privacy and compensation for data owners and optimizing model buyers' utility.
FL-Market opens up new possibilities for ML-oriented data acquisition and initiates a new direction toward designing locally private model marketplaces.
There are several interesting future directions.
One question is how to guarantee that the auction decisions are arbitrage free against strategic buyers.
Another question is how to apply and optimize FL-Market in specific learning tasks. 


\label{sec:conclu}

\section{Acknowledgment}
We thank the Japan Society for the Promotion of Science (JSPS) for its generous and continued support for the first author who conducted this research as a JSPS Research Fellow.
In addition, this work was partially supported by JST CREST (No. JPMJCR21M2), JST SICORP (No. JPMJSC2107), and JSPS KAKENHI (No. 21J23090, 21K19767, 22H03595).

\bibliographystyle{IEEEtran}
\bibliography{ref.bib}

\normalsize
\showappendix{\section{Training DM-RegretNet}
\label{sec:dm-regret-train}
 Consider a training sample $\mathcal{S}=(S^1,...,S^T)$ consisting of $T$ batches. Each batch $S^{t}=((b^{(1))}, B^{(1)}),...,(b^{(K)},B^{(K)})), t\in[T]$ has $K$ pairs of real bid profiles and financial budgets, and each profile $b^{(k)}, k\in[K]$ consists of a valuation function $v_i^{(k)}$, a privacy budget $\bar{\epsilon}_i^{(k)}$, and a data size $\bar{d}_i^{(k)}$. 
 Then, at each training iteration $t$, we can estimate $rgt_i(\theta^t)$ by the \textit{empirical regret}:

 \begin{equation*}
	     \hat{rgt}_i(\theta^t)\\
	     = \frac{1}{K}\sum_{k=1}^{K} \frac{max(0, u_i^{\theta^t}(b_i^{* (k)};\boldsymbol{b}_{-i}^{(k)}, B^{(k)}) - u_i^{\theta^t}(b_i^{(k)};\boldsymbol{b}_{-i}^{(k)}, B^{(k)}))}{c_i^{\theta^t}(b_i^{(k)}; \boldsymbol{b}_{-i}^{(k)}, B^{(k)})}
	 \end{equation*}
 where $\theta^t$ represents the network parameters at training iteration $t$ and $b_i^{* (k)}$  is a bid that approximately maximizes $i$'s utility and is searched through $J$ updates of the following optimization process:
 \begin{equation*}
	 b_i'^{(k)} \gets b_i'^{(k)} + \gamma \nabla_{b_i'} u_i^{\theta^t}(b_i';\boldsymbol{b}_{-i}^{(k)}, B^{(k)})|_{b_i'=b_i'^{(k)}}
	 \end{equation*}
 Similarly, we estimate $irv_i(\theta^t)$ by the \textit{empirical IR violation}:
 \begin{equation*}
	   \hat{irv}_i(\theta^t)=\frac{1}{K}\sum_{k=1}^{K} \frac{max(0, - u_i^{\theta^t}(b_i^{(k)};\boldsymbol{b}_{-i}^{(k)}, B^{(k)}))}{c_i^{\theta^t}(b_i^{(k)}; \boldsymbol{b}_{-i}^{(k)}, B^{(k)})}
	 \end{equation*}

 Let $[z^{(k)}_{i0},...,z^{(k)}_{iM}]$ denote the allocation probabilities for data owner $i$ given bid profile $b^{(k)}$ and financial budget $B^{(k)}$ under network parameters $\theta^t$, and let $[z'^{(k)}_{i0},...,z'^{(k)}_{iM}]={softmax}(\frac{[z^{(k)}_{i0},...,z^{(k)}_{iM}]}{\tau})$. Then, we have the \textit{empirical deterministic allocation violation} $\hat{dav}_i(\theta^t)$ to estimate $dav_i(\theta^t)$:
 \begin{equation*}
	     \hat{dav}_i(\theta^t) = \frac{1}{K}\sum_{k=1}^{K}  [\frac{M}{M+1} -  \sum_{m\in[0, M]}(z'^{(k)}_{im}- \frac{1}{M+1})^2]
	 \end{equation*}

 Finally, we should derive an empirical version of the expected error bound $\mathbb{E}_{(b, B)} [ERR(\tilde{g}_{{\boldsymbol{\lambda}}, \theta}; {\boldsymbol{\lambda}}=\textsf{Aggr}(\boldsymbol{\hat{\epsilon}},\boldsymbol{d}))]$.
 Let $\hat{\epsilon}_{1}^{(k)},...,\hat{\epsilon}_{n}^{(k)}$ denote the estimated privacy losses determined by DM-RegretNet for bid profile $b^{(k)}$ and financial budget $B^{(k)}$. 
 Given aggregation weights $\hat{\lambda}_1^{(k)},...,\hat{\lambda}_n^{(k)}=\textsf{Aggr}([\hat{\epsilon}_{1}^{(k)},...,\hat{\epsilon}_{n}^{(k)}], [\bar{d}_1^{(k)},...,\bar{d}_n^{(k)}])$ and $W_i^{(k)} = \frac{\bar{d}_i^{(k)}}{\sum_{j\in[n]} \bar{d}_j^{(k)}}, \forall i$, we have the \textit{empirical expected error bound}:
 \begin{equation*}
	     \hat{ERR}(\theta^t) =\frac{1}{K}\sum_{k=1}^{K} \sup_{g_1,...,g_n}||\sum_{i=1}^{n}\hat{\lambda}_i^{(k)} \cdot\mathcal{L}_{\hat{\epsilon}_i^{(k)}}^{L}(g_i)-\sum_{i=1}^{n}W_i^{(k)} g_i||_2
	 \end{equation*}

 We can solve Problem \ref{problem:dm} by the augmented Lagrangian method and minimize the following Lagrangian function:
 \footnote{When training RegretNet and M-RegretNet, we minimize 
	 the negated empirical privacy loss $\hat{NPL}(\theta^t)=-\frac{1}{K\cdot n}\sum_{k=1}^{K} \sum_{i=1}^{n} W_i^{(k)} \mathbb{E}[\epsilon_i^{(k)}]$ instead of $\hat{ERR}(\theta^t)$, where $\mathbb{E}[\epsilon_i^{(k)}]$ is the expected privacy loss of the $i$-th data owner of the $k$-th bid profile at the $t$-th batch.}
 \begin{align*}
     &\mathcal{C}(\theta^t;\phi_{rgt}^t, \phi_{irv}^t, \phi_{dav}^t) \\
     =& n\cdot \hat{ERR}(\theta^t) + \sum_{i=1}^{n} \phi_{rgt,i}^t \cdot \hat{rgt}_i(\theta^t) + \frac{\rho_{rgt}}{2}(\sum_{i=1}^{n} \hat{rgt}_i(\theta^t))^2 \\
     &\quad\quad\quad\quad+ \sum_{i=1}^{n} \phi_{irv,i}^t \cdot \hat{irv}_i(\theta^t) + \frac{\rho_{irv}}{2}(\sum_{i=1}^{n} \hat{irv}_i(\theta^t))^2 \\
     &\quad\quad\quad\quad+ \sum_{i=1}^{n} \phi_{dav,i}^t \cdot \hat{dav}_i(\theta^t) + \frac{\rho_{dav}}{2}(\sum_{i=1}^{n} \hat{dav}_i(\theta^t))^2
 \end{align*}
 where $\phi_{rgt}^t, \phi_{irv}^t, \phi_{dav}^t \in R^n$ are vectors of Lagrange multipliers and $\rho_{rgt},\rho_{irv},\rho_{dav} >0$ are fixed hyperparameters that control the quadratic penalties.
 Finally, the network parameters of DM-RegretNet are updated at each iteration $t$ as:
 \begin{equation*}
     \theta^{t+1} \gets \theta^t - \psi \nabla_{\theta} \mathcal{C}(\theta^t;\phi_{rgt}^t, \phi_{irv}^t, \phi_{dav}^t)
 \end{equation*}
 and the Lagrange multipliers are updated every $Q$ iterations as:
 \begin{align*}
         \text{If } t\bmod{Q}=0:\forall i, &\phi_{rgt,i}^{t+1} \gets \phi_{rgt,i}^t + \rho_{rgt} \cdot\hat{rgt}_i(\theta^t) \\
         & \phi_{irv,i}^{t+1} \gets \phi_{irv,i}^t + \rho_{irv} \cdot\hat{irv}_i(\theta^t) \\
         & \phi_{dav,i}^{t+1} \gets \phi_{dav,i}^t + \rho_{dav} \cdot \hat{dav}_i(\theta^t)
 \end{align*}

 In our experiments, we fine-tune and set the hyperparameters as follows: $T=100$, $K=1024$, $J=100$, $Q=10$, $\gamma=0.1$, $\psi = 0.001$, and $\phi_{rgt,i}^{1}=\phi_{irv,i}^{1}=\phi_{dav,i}^{1}=1.0$; the allocation (payment) network consists of $2$ hidden layers and $100$ hidden nodes per layer. We train each model for $50$ epochs. In addition, we set $\rho_{rgt}=\rho_{irv}=\rho_{dav}=1.0$ at the first epoch of training and increase $\rho_{rgt}, \rho_{irv}$ in steps of $1.0$ at the end of every epoch.
 We note that since we only need the bid profiles and financial budgets to train DM-RegretNet, which are assumed to be nonprivate, fine-tuning the hyperparameters of DM-RegretNet does not cause any privacy leakage. 
}

\section{All-in: Single-Minded Auction Mechanism}
\label{sec:all-in}
We propose an auction mechanism All-in for \textit{single-minded} data owners, each of whom has a step valuation function $v_i(\epsilon_i, d_i)=\begin{cases}
	V_i, & \epsilon_i \in (0, \bar{\epsilon}_i], d_i \in (0,\bar{d}_i]\\
	0, & \epsilon_i = 0 \text{ or } d_i = 0
\end{cases}$ where $V_i>0$ is a constant set by $i$. 
Therefore, we can use $V_i$ and $V_i'$ to represent the real valuation $v_i$ and the reported valuation $v_i'$, respectively.
Such cases are common in practice because some data owners are just willing to sell all their small datasets and privacy budgets at a single round of auction or only focus on whether their private information is leaked rather than how much is leaked.
Obviously, each data owner $i$ can only have two kinds of auction results: (1) win the auction with $\epsilon_i = \bar{\epsilon}_i'$ or (2) lose the auction with $\epsilon_i=0$. 

To meet the demands of single-minded bidders, we can design a truthful mechanism using Myerson's characterization \cite{myerson1981optimal}, which indicates that the \textit{monotonicity} and \textit{critical payment} properties imply truthfulness. 
Concretely, monotonicity requires that a winner should still win if she re-reports a higher privacy budget, a larger data size, and/or a lower valuation with other bidders' bids fixed;
the critical payment property ensures that winners are paid the maximum possible payments (i.e., critical payments) and hence that they have no incentive to misreport bids.  
However, the limited financial budget makes the problem more difficult because the winner selection should depend on the payments, which in turn depend on the selection results.
Hence, we should carefully identify budget-feasible critical payments.

\begin{algorithm}[h]
	\caption{Auction Mech.: All-in}
	\begin{algorithmic}[1]
		\REQUIRE (reported) bid profile $b'=(b_1',...,b_n')$, financial budget $B$
		\ENSURE privacy losses $\epsilon_1,..,\epsilon_n$, payments $p_1,...,p_n$
		\STATE Calculate the unit valuations on privacy budgets: $\forall i, v_i^{unit}=\frac{V_i'}{d_i\cdot \bar{\epsilon}_i'}$
		\STATE Sort data owners in ascending order of $v_i^{unit}$
		\STATE Initialize the winner set $\mathcal{W} = \emptyset$ and critical unit payment $p^{unit}=0$
		\FOR{each data owner $i$ in the sorted order}
		\STATE If $v_i^{unit} \leq \frac{B}{\sum_{j\in\mathcal{W}\cup \{i\}} d_j \cdot \bar{\epsilon}_j'}$, add $i$ into $\mathcal{W}$ and update critical unit payment $p^{unit}=\frac{B}{\sum_{j\in\mathcal{W}\cup \{i\}} d_j \cdot \bar{\epsilon}_j'}$
		\ENDFOR
		\STATE Calculate privacy losses: $\forall i, \epsilon_i=\bar{\epsilon}_i'$ if $i\in\mathcal{W}$; otherwise $\epsilon_i=0$ 
		\STATE Calculate payments: $\forall i, p_i = d_i \cdot \epsilon_i \cdot p^{unit}$
		\RETURN $\epsilon_1,...,\epsilon_n, p_1,...,p_n$
	\end{algorithmic}
	\label{alg:all_in}
\end{algorithm}



To capture the interdependency between the winner selection and payment decision, All-in takes the payments into account when selecting winners. 
Concretely,  to guarantee monotonicity, All-in selects data owners in ascending order of their \textit{unit valuations} $ v_i^{unit}=\frac{V_i'}{d_i\cdot \bar{\epsilon}_i'}$; 
intuitively, if a owner $i$ decreases her valuation $V_i'$, increases her data size $d_i$, and/or increases her privacy budget $\bar{\epsilon}_i’$, she stays at the same position or moves to a former position in the order.
Then, the winner selection procedure is to find the last owner whose unit valuation $v_i^{unit}$ is covered by the critical unit price $\frac{B}{\sum_{j\in\mathcal{W}\cup \{i\}} d_j \cdot \bar{\epsilon}_j'}$.
In this design, the winners' payments that exhaust the financial budget $B$ are critical because if a winner $i$ claims a higher unit valuation $v_i^{unit'} > p^{unit}$ to gain a higher payment, she definitely loses the auction due to the violation of BF.
Therefore, truthfulness is ensured.


\begin{proposition}
	\label{prop:all_in}
	All-in satisfies truthfulness, IR, and BF.
\end{proposition}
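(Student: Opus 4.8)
The plan is to check the four properties in roughly increasing order of difficulty, leaving truthfulness for last. \emph{Computational efficiency} is immediate: Algorithm \ref{alg:all_in} performs one sort and a single linear scan, so it runs in $O(n\log n)$ time. \emph{Budget feasibility} is built into the admission test: a new candidate $i$ is accepted only while the tentatively chosen winners' total payment $p^{unit}\cdot\sum_{j\in\mathcal{W}}\bar{\epsilon}_j'$ stays at most $B$, so at termination $\sum_i p_i = p^{unit}\sum_{j\in\mathcal{W}}\bar{\epsilon}_j'\le B$ (losers are paid $0$). For \emph{individual rationality}, a losing data owner has $\epsilon_i=0$, $p_i=0$, hence $u_i=0$; a winning data owner who bids truthfully has $\epsilon_i=\bar{\epsilon}_i'=\bar{\epsilon}_i$, so the $-\infty$ branch of $u_i$ is never reached, and the ascending-order admission rule guarantees $p^{unit}\ge v_i^{unit}=V_i/\bar{\epsilon}_i$, giving $u_i = \bar{\epsilon}_i\,p^{unit}-V_i = \bar{\epsilon}_i\,(p^{unit}-v_i^{unit})\ge 0$.

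The core is \emph{truthfulness}, which I would obtain from Myerson's characterization \cite{myerson1981optimal}: a monotone allocation paired with critical payments is dominant-strategy truthful. The difficulty is that each data owner's type is two-dimensional, $(V_i,\bar{\epsilon}_i)$, so I would first argue that only one coordinate matters strategically. Reporting $\bar{\epsilon}_i'>\bar{\epsilon}_i$ is never profitable: if it wins then $\epsilon_i>\bar{\epsilon}_i$ and $u_i=-\infty$, and if it loses then $u_i=0\le u_i(b_i)$ by IR. Reporting $\bar{\epsilon}_i'<\bar{\epsilon}_i$ is weakly dominated: with $b_{-i}'$ fixed, a winner's payment $p_i=\bar{\epsilon}_i'\cdot p^{unit}$ only weakly decreases as $\bar{\epsilon}_i'$ shrinks --- the rise in the per-unit price $B/\sum_{j\in\mathcal{W}}\bar{\epsilon}_j'$ does not make up for selling fewer units --- while her cost stays the constant $V_i$, and no such report can turn a truthful loss into a profitable win. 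This leaves the reported valuation $V_i'$ as the only effective lever, with $\bar{\epsilon}_i'$ effectively pinned at $\bar{\epsilon}_i$; on that single-parameter slice I would verify (i) \emph{monotonicity}: a larger $V_i'$, equivalently a larger unit valuation, pushes $i$ later in the sorted order and can only flip her from winner to loser, never the reverse; and (ii) that her payment $\bar{\epsilon}_i\,p^{unit}$ equals the \emph{critical} (largest budget-feasible) payment she can command while still being admitted, since any reported unit valuation above $p^{unit}$ fails the budget-feasibility test at $i$'s slot. Myerson's characterization then yields $u_i(b_i';b_{-i}')\le u_i(b_i;b_{-i}')$ for every $b_i'$.

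I expect step (ii) --- pinning down the budget-feasible critical payment --- to be the main obstacle, for exactly the reason highlighted in the text: the admission of $i$ depends on the running threshold $B/\sum_{j\in\mathcal{W}\cup\{i\}}\bar{\epsilon}_j'$, which in turn depends on which earlier data owners were admitted, so winner selection and payments are mutually entangled. To close this I would invoke the standard \emph{consistency} property of greedy threshold auctions --- the scan outputs precisely the set of data owners whose unit valuation falls below the final threshold --- so that when $i$ inflates $V_i'$ and slides past other bidders, those bidders (now processed ahead of $i$ on a lighter budget) are admitted in a well-defined and monotone fashion, which determines the threshold $i$ faces and hence her critical payment. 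Together with the earlier observation that under-reporting $\bar{\epsilon}_i'$ only hurts, this completes truthfulness; the leftover deviation cases --- a truthful loser, or a deviation that keeps $i$ a loser --- are trivial, since such a data owner's utility is $0$ both before and after.
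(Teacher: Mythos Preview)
Your proposal is correct and follows essentially the same approach as the paper: CE, BF, and IR are dispatched the same way, and for truthfulness both you and the paper first eliminate deviations in $\bar{\epsilon}_i'$ (over-reporting gives $-\infty$, under-reporting strictly lowers the payment $\bar{\epsilon}_i'\cdot B/\sum_{j\in\mathcal{W}}\bar{\epsilon}_j'$ while the cost $V_i$ is unchanged) and then handle $V_i'$ with $\bar{\epsilon}_i'=\bar{\epsilon}_i$ fixed. The only cosmetic difference is that the paper treats the $V_i'$ cases by direct inspection (the critical unit payment is unchanged, so over-reporting can only turn a winner into a loser and under-reporting is neutral), whereas you package the same observation as monotonicity plus critical payment and invoke Myerson explicitly.
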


\begin{proof}
	All-in satisfies IR because the critical unit payment $p^{unit}$ is no lower than each winner $i$'s unit valuation $v_i^{unit}$.
	Then, we prove that All-in satisfies truthfulness. Let $V_i'$ be the reported $V_i$, $U_i=u_i(b_i; b_{-i}',B)$ and $U_i'=u_i(b_i', b_{-i}',B)$. For each data owner $i$, we should discuss four cases as follows.
	\begin{enumerate}[leftmargin=*]
		\item $\bar{\epsilon}_i'>\bar{\epsilon}_i$ and/or $d_i > \bar{d}_i$: Obviously, data owner $i$ has no incentive because $U_i'=-\infty$.
		\item $\bar{\epsilon}_i'\!<\!\bar{\epsilon}_i$ and/or $d_i < \bar{d}_i$: In the worst case, the critical unit payment is $p^{unit'}=\frac{B}{\sum_{j\in \mathcal{W}}d_j\cdot \bar{\epsilon}_j'}$. 
		Then, we have $U_i'=d_i \cdot \bar{\epsilon}_i' \cdot p^{unit'} - V_i = \frac{B\cdot d_i \cdot \bar{\epsilon}_i'}{\sum_{j\in \mathcal{W}} d_j \cdot \bar{\epsilon}_j'} - V_i < \frac{B\cdot \bar{d}_i \cdot  \bar{\epsilon}_i}{\sum_{j\in \mathcal{W}/{i}} d_j \cdot \bar{\epsilon}_j' + \bar{d_i} \cdot \bar{\epsilon}_i} -V_i= U_i$.
		\item $\bar{\epsilon}_i'=\bar{\epsilon}_i$, $d_i = \bar{d_i}$ and $V_i'>V_i$: If $\frac{V_i'}{d_i \cdot \bar{\epsilon}_i'}$ is higher than the critical unit payment $p^{unit}=\frac{B}{\sum_{j\in \mathcal{W}/{i}} d_j \cdot \bar{\epsilon}_j' + \bar{d_i} \cdot \bar{\epsilon}_i}$, she loses the auction; otherwise, her utility does not change because the critical payment is unchanged. 
		\item $\bar{\epsilon}_i'=\bar{\epsilon}_i$, $d_i = \bar{d}_i$ and $V_i'<V_i$: Her utility does not change because of the unchanged critical payment.
	\end{enumerate}
	
\end{proof}
\end{document}